%% file: 00-main.tex
\documentclass{article}

\usepackage{microtype}
\usepackage{graphicx}
\usepackage{subcaption}
\usepackage{xltabular}

\usepackage{array}
\usepackage{ragged2e}
\usepackage{booktabs} % for professional tables

\usepackage{hyperref}
\usepackage{booktabs}
\usepackage{tabularx}

\usepackage[preprint]{icml2026}

\usepackage{amsmath}
\usepackage{amssymb}
\usepackage{mathtools}
\usepackage{amsthm}

\usepackage{enumitem}

\usepackage{multirow}
\usepackage{graphicx}
\usepackage{colortbl}
\usepackage{xcolor}
\usepackage{makecell} % 用于表头内的强制换行
\usepackage[capitalize,noabbrev]{cleveref}

\theoremstyle{plain}
\newtheorem{theorem}{Theorem}[section]
\newtheorem{proposition}[theorem]{Proposition}

\theoremstyle{definition}
\newtheorem{definition}[theorem]{Definition}

\theoremstyle{remark}

\usepackage[textsize=tiny]{todonotes}

\icmltitlerunning{$\mathtt{FibVLA}$: An Efficient Temporal Vision-Language-Action Model with Fibonacci Sampling
}

\begin{document}

\twocolumn[
  \icmltitle{
  % $\mathtt{FibVLA}$: Balancing Efficiency and Accuracy in  \\ Vision-Language-Action Models via Fibonacci Temporal Sampling 
  $\mathtt{FibVLA}$: An Efficient Temporal Vision-Language-Action Model \\with Fibonacci Sampling
    }

  % It is OKAY to include author information, even for blind submissions: the
  % style file will automatically remove it for you unless you've provided
  % the [accepted] option to the icml2026 package.

  % List of affiliations: The first argument should be a (short) identifier you
  % will use later to specify author affiliations Academic affiliations
  % should list Department, University, City, Region, Country Industry
  % affiliations should list Company, City, Region, Country

  % You can specify symbols, otherwise they are numbered in order. Ideally, you
  % should not use this facility. Affiliations will be numbered in order of
  % appearance and this is the preferred way.
  \icmlsetsymbol{equal}{*}
  \icmlsetsymbol{email}{$\ddagger$}

  \begin{icmlauthorlist}
    \icmlauthor{Li Lin}{seu}
    \icmlauthor{Wujun Xu}{seu,email}
    \icmlauthor{Weiwei Meng}{seu}
    \icmlauthor{Kaiwen Xia}{ntu,email}
    \icmlauthor{Kang Hao Cheong}{ntu}
    \icmlauthor{Shuai Wang}{seu}
  \end{icmlauthorlist}

  \icmlaffiliation{seu}{Southeast University, Nanjing, Jiangsu, China}
  \icmlaffiliation{ntu}{Nanyang Technological University, Singapore}

  % You may provide any keywords that you find helpful for describing your
  % paper; these are used to populate the "keywords" metadata in the PDF but
  % will not be shown in the document
  \icmlkeywords{Machine Learning, ICML}

  \vskip 0.3in
]

% this must go after the closing bracket ] following \twocolumn[ ...

% This command actually creates the footnote in the first column listing the
% affiliations and the copyright notice. The command takes one argument, which
% is text to display at the start of the footnote. The \icmlEqualContribution
% command is standard text for equal contribution. Remove it (just {}) if you
% do not need this facility.

% Use ONE of the following lines. DO NOT remove the command.
% If you have no special notice, KEEP empty braces:
\printAffiliationsAndNotice{\textsuperscript{$\ddagger$}Emails: 220242461@seu.edu.cn, kaiwen.xia@ntu.edu.sg. }  % no special notice (required even if empty)
% Or, if applicable, use the standard equal contribution text:
% \printAffiliationsAndNotice{\icmlEqualContribution}

\begin{abstract}
Vision-language-action models (VLAs), which leverage the cognition of multimodal information to infer physical-world actions, provide a generalized solution for embodied AI applications. Conventional VLAs usually concentrate on current digital cognition. While some efforts are made to enhance VLAs' reasoning capabilities by capturing temporal information, encoding the long-context history causes an efficiency-decreasing issue. To reconcile the conflict between capturing temporal information and maintaining inference efficiency in VLAs, this paper introduces $\mathtt{FibVLA}$, an efficient framework featuring temporal perception of long-context history. Specifically, we leverage logarithmic hindsight sampling to both proprioceptive states and visual frames to capture long-term temporal dependencies with minimal redundancy. For the action expert, we introduce the flow matching to produce action distributions, and the Fibonacci recurrent inference strategy to generate long-range planning steps based on real-time closed-loop feedback. Experiments demonstrate that $\mathtt{FibVLA}$ significantly improves action smoothness and success rates without retraining large-scale visual encoders. Efficiency analysis demonstrates superior real-time responsiveness compared to video-based baselines in real-world evaluations.
\end{abstract}

\input{01-Introdution}
\input{02-Related_works}
\input{03-Method}

\input{04-Experiment}

\section{Conclusion}
In this paper, we propose $\mathtt{FibVLA}$, an efficient temporal VLA framework designed to bridge the gap between temporal perception and real-time inference. 
We design a novel gradual-frequency sampling strategy based on Fibonacci sampling, which allows the model to directly reuse feature caches from the previous timestep during inference without incurring additional temporal encoding overhead, 
thereby capturing temporal context information for embodied tasks. Experimental results on three benchmarks and the real-world dataset show that $\mathtt{FibVLA}$ significantly improves performance in long-horizon tasks and demonstrates real-time robustness in the real world. 
In future work, we plan to leverage the Fibonacci recursive principle to explore replay buffer mechanisms, enhancing $\mathtt{FibVLA}$’s continual learning for embodied tasks under out-of-distribution scenarios.

% We devise a novel variable-frequency sampling paradigm that unifies logarithmic hindsight sampling with a channel-wise temporal encoder to capture essential manipulation dynamics while minimizing computational redundancy. Our results demonstrate that incorporating temporal context through our method significantly enhances success rates in long-horizon tasks compared to static baselines. Furthermore, the proposed Fibonacci recurrent inference strategy allows the model to achieve superior computational efficiency and robust generalization across diverse real-world scenarios.

% % Acknowledgements should only appear in the accepted version.
% \section*{Acknowledgements}

% \textbf{Do not} include acknowledgements in the initial version of the paper
% submitted for blind review.

% If a paper is accepted, the final camera-ready version can (and usually should)
% include acknowledgements.  Such acknowledgements should be placed at the end of
% the section, in an unnumbered section that does not count towards the paper
% page limit. Typically, this will include thanks to reviewers who gave useful
% comments, to colleagues who contributed to the ideas, and to funding agencies
% and corporate sponsors that provided financial support.

\section*{Impact Statement}

% Authors are \textbf{required} to include a statement of the potential broader
% impact of their work, including its ethical aspects and future societal
% consequences. This statement should be in an unnumbered section at the end of
% the paper (co-located with Acknowledgements -- the two may appear in either
% order, but both must be before References), and does not count toward the paper
% page limit. In many cases, where the ethical impacts and expected societal
% implications are those that are well established when advancing the field of
% Machine Learning, substantial discussion is not required, and a simple
% statement such as the following will suffice:

This paper advances the foundations of Embodied AI. By improving the efficiency of vision-language-action models, this work reduces the computational overhead for robotic training and inference, mitigating the environmental footprint of large-scale deployment. Furthermore, these efficiency gains facilitate the democratization of advanced robotic control on resource-constrained platforms. As fundamental research, its specific societal consequences primarily depend on downstream applications, and we foresee no specific negative impacts.

% The above statement can be used verbatim in such cases, but we encourage
% authors to think about whether there is content which does warrant further
% discussion, as this statement will be apparent if the paper is later flagged
% for ethics review.

% % In the unusual situation where you want a paper to appear in the
% % references without citing it in the main text, use \nocite
\nocite{langley00}

\bibliography{example_paper}
\bibliographystyle{icml2026}

%%%%%%%%%%%%%%%%%%%%%%%%%%%%%%%%%%%%%%%%%%%%%%%%%%%%%%%%%%%%%%%%%%%%%%%%%%%%%%%
%%%%%%%%%%%%%%%%%%%%%%%%%%%%%%%%%%%%%%%%%%%%%%%%%%%%%%%%%%%%%%%%%%%%%%%%%%%%%%%
% APPENDIX
%%%%%%%%%%%%%%%%%%%%%%%%%%%%%%%%%%%%%%%%%%%%%%%%%%%%%%%%%%%%%%%%%%%%%%%%%%%%%%%
%%%%%%%%%%%%%%%%%%%%%%%%%%%%%%%%%%%%%%%%%%%%%%%%%%%%%%%%%%%%%%%%%%%%%%%%%%%%%%%
\newpage
\appendix
\onecolumn

\section{Detailed Experimental Setups and Benchmarks}
\begin{figure*}[htbp] %
    \centering
    \includegraphics[width=1.0\textwidth]{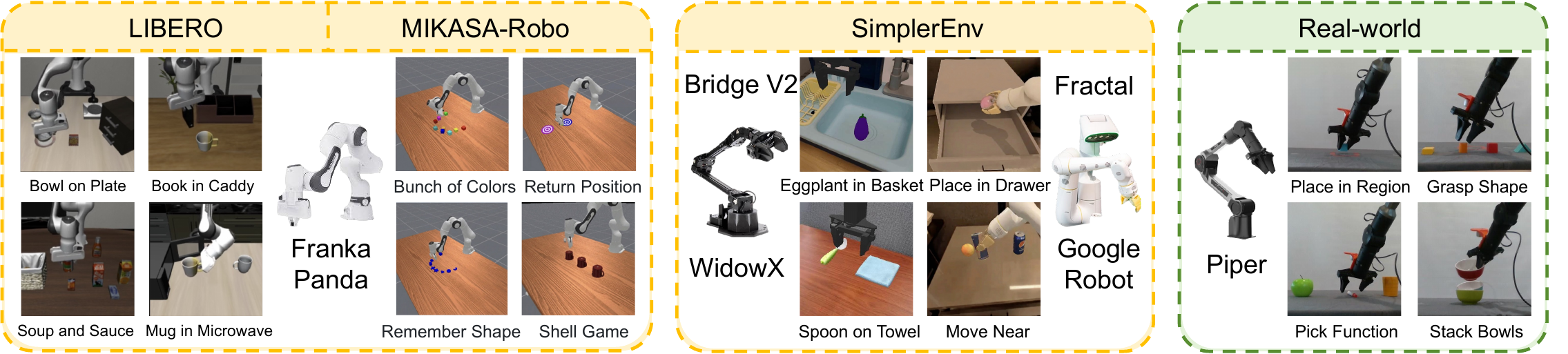} % 设置图片宽度
    \caption{Overview of experimental setups and task specifications. The evaluation encompasses three simulation benchmarks and real-world robotic tasks across diverse platforms, including the \textbf{Franka Panda} (LIBERO-Long and MIKASA-Robo), \textbf{WidowX 250} (SimplerEnv-Bridge), and \textbf{Google Robot} (SimplerEnv-Fractal), and the \textbf{Piper} manipulator (Real-world evaluation).}
    \label{fig:experiment1}
\end{figure*}
\input{table/Benchmark}
\section{Extended Description of Baselines}
\input{table/Baseline}
% You can have as much text here as you want. The main body must be at most $8$
% pages long. For the final version, one more page can be added. If you want, you
% can use an appendix like this one.

% The $\mathtt{\backslash onecolumn}$ command above can be kept in place if you
% prefer a one-column appendix, or can be removed if you prefer a two-column
% appendix.  Apart from this possible change, the style (font size, spacing,
% margins, page numbering, etc.) should be kept the same as the main body.
%%%%%%%%%%%%%%%%%%%%%%%%%%%%%%%%%%%%%%%%%%%%%%%%%%%%%%%%%%%%%%%%%%%%%%%%%%%%%%%
%%%%%%%%%%%%%%%%%%%%%%%%%%%%%%%%%%%%%%%%%%%%%%%%%%%%%%%%%%%%%%%%%%%%%%%%%%%%%%%
\section{Detailed Task Specifications}
\label{sec:subtasks}
\input{table/Instruction}
\newpage
\section{Per-Task Performance Analysis on LIBERO-Long}
\label{sec:libero-long}
\input{table/Libero-10}

\section{Scoring Rubric for Real-world Experiments}
\label{sec:rubric}
To comprehensively evaluate the error-recovery capabilities and execution fluency of our model in long-horizon tasks, we propose a quantitative scoring framework as follows:
\begin{itemize}
    \item \textbf{Task Decomposition}: Each long-horizon task is decomposed into five key sub-steps (i.e., action primitives), representing the critical stages of the manipulation sequence.
    \item \textbf{Scoring Criteria}: Each sub-step is evaluated based on execution quality using a tiered scoring system:
    \begin{itemize}
        \item \textbf{2 Points (Fluent Execution)}: The robot successfully completes the sub-step in a single, continuous motion without hesitation or significant deviation.
        \item \textbf{1 Point (Corrective Execution)}: The robot encounters minor deviations or stutters but successfully completes the sub-step through autonomous self-correction.
        \item \textbf{0 Point (Execution Failure)}: The robot fails to complete the sub-step or encounters an irreversible error, such as a collision or dropping the object.
    \end{itemize}
    \item \textbf{Statistical Protocol}: To ensure statistical significance, each task is conducted over 10 independent trials. The maximum possible score for a single task is calculated as: 
    \begin{equation}
        S_{max} = 5 \text{ sub-steps} \times 2 \text{ points} \times 10 \text{ trials} = 100
    \end{equation}
\end{itemize}
\input{table/Rubric}

\section{Hyperparameter Settings and Dataset Statistics}
\label{FibVLA_Hyperparameter}
\input{table/experimental_para}
\section{Real-world Experimental Setup and Task Execution}
\label{appendix: experiment}
The physical experimental platform is centered around the Agilex Piper robot arm, integrated with the Agilex Pika teleoperation kit for demonstration data collection. Visual feedback is provided by wrist-mounted and third-person Intel RealSense D435 cameras, supporting both manual teleoperation and autonomous model inference. To ensure the reproducibility of our experiments, we deliberately designed the environment using readily available components. The setup employs a standardized white background board and a black tablecloth to minimize environmental interference, while all task objects are common household items to facilitate the benchmarking of $\mathtt{FibVLA}$ in accessible settings.
\begin{figure*}[htbp] %
    \centering
    \includegraphics[width=0.72\textwidth]{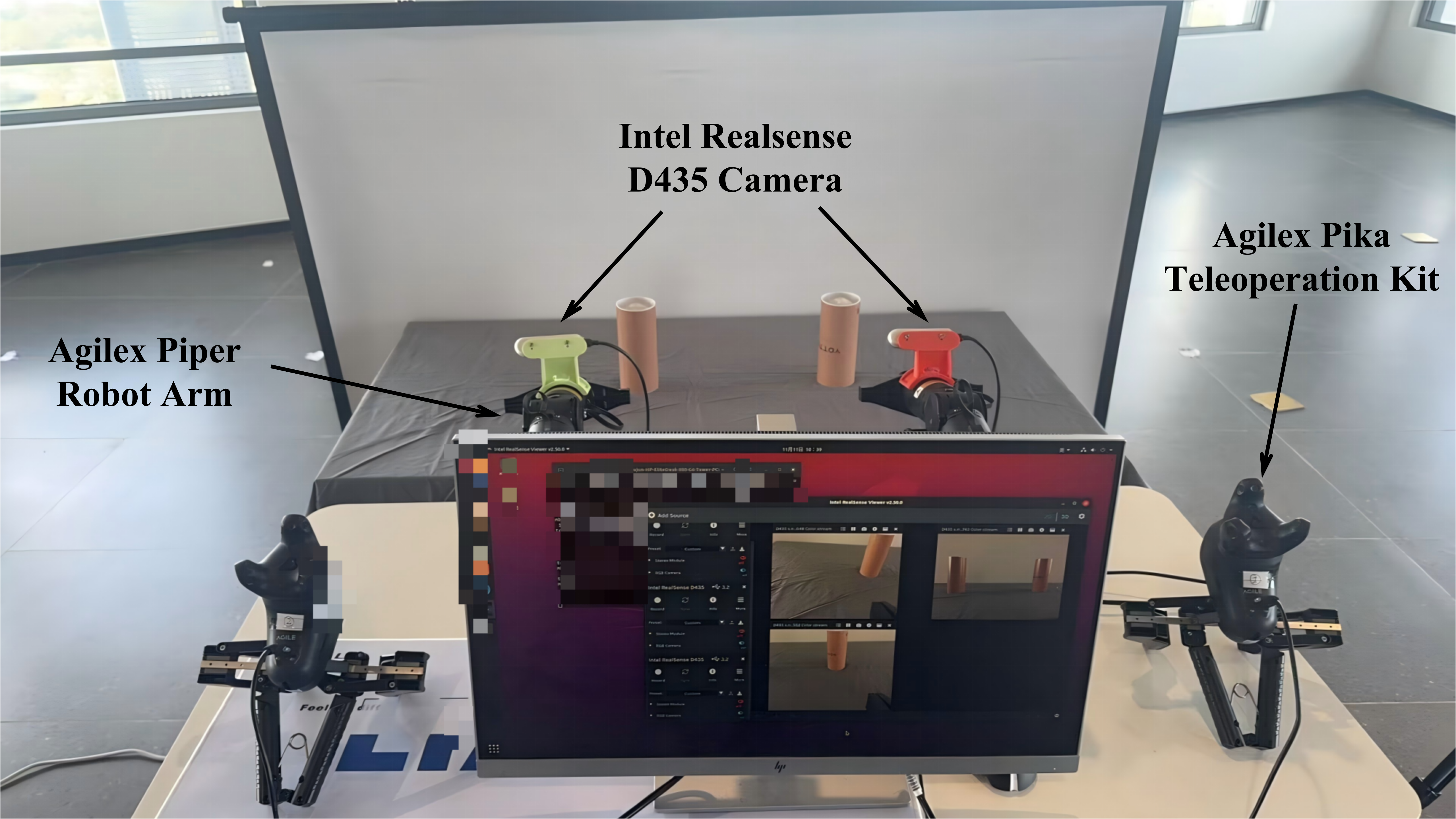} % 设置图片宽度
    \caption{Real-world experimental scenario}
    \label{fig:experiment2}
\end{figure*}

We use the ``Place Bowl (Test)" task as an illustrative example to demonstrate the real-world inference performance of $\mathtt{FibVLA}$. To enhance generalization, the demonstration data include balanced initial configurations (e.g., swapping the left-right positions of the bowl and plate as shown in Fig.~\ref{fig:step1}), and the model successfully executes the task regardless of the initial layout. Crucially, $\mathtt{FibVLA}$ exhibits a deep understanding of temporal logic rather than mere imitation learning. When external interventions revert the task state during inference (e.g., resetting from Fig.~\ref{fig:step3} back to Fig.~\ref{fig:step1}, or Fig.~\ref{fig:step5} back to Fig.~\ref{fig:step3}), the model autonomously perceives the environmental shift and re-executes the necessary preceding actions. Such dynamic error recovery proves that $\mathtt{FibVLA}$ explicitly models temporal information to achieve robust long-horizon reasoning.
\begin{figure*}[t] % 建议使用 [t] 确保跨栏图排在页顶，减少空白
    \centering
    % --- 第一张图：移向并夹取盘子 ---
    \begin{subfigure}[t]{0.19\textwidth}
        \centering
        \includegraphics[width=\linewidth]{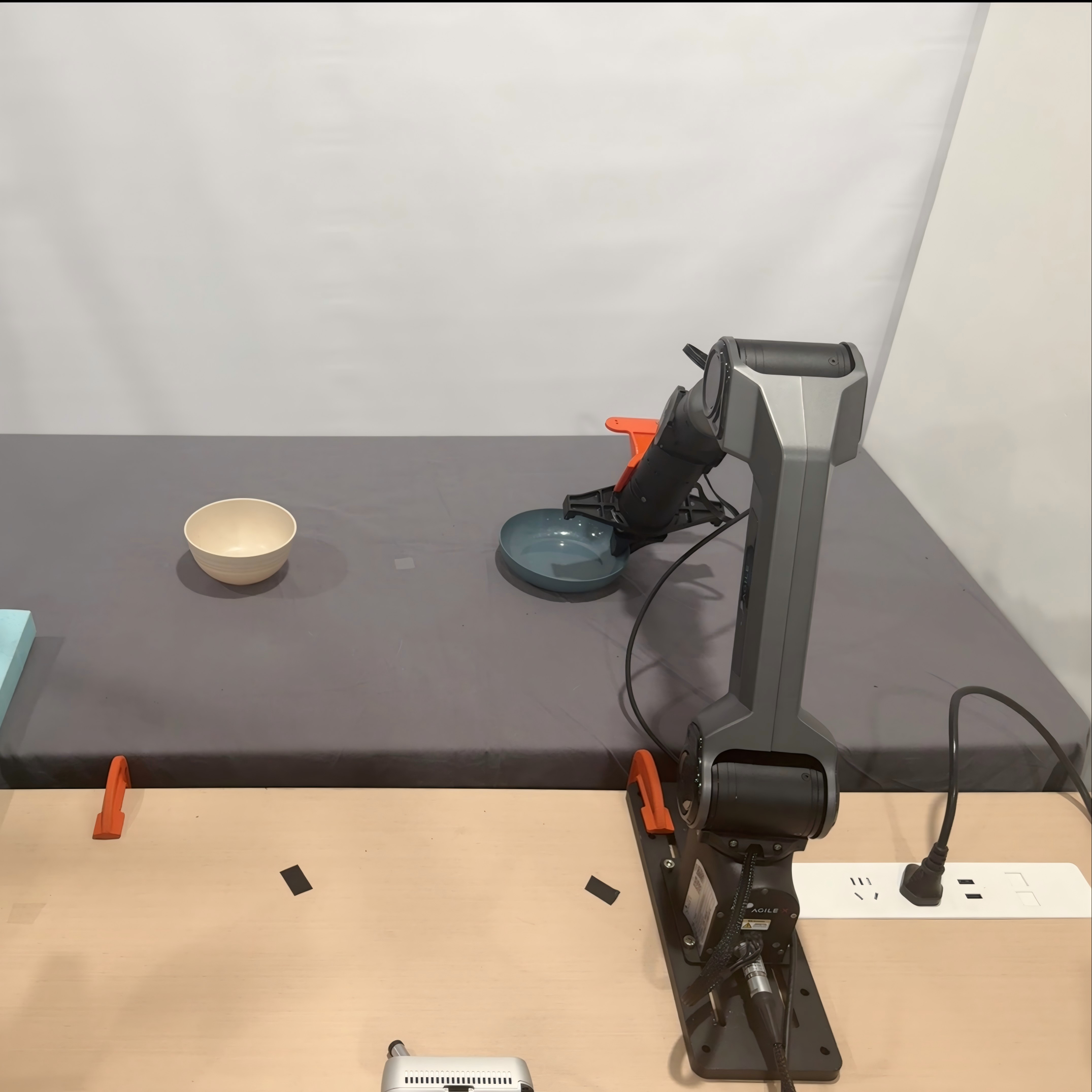}
        \caption{Approach and grasp plate}
        \label{fig:step1}
    \end{subfigure}
    \hfill
    % --- 第二张图：放置盘子 ---
    \begin{subfigure}[t]{0.19\textwidth}
        \centering
        \includegraphics[width=\linewidth]{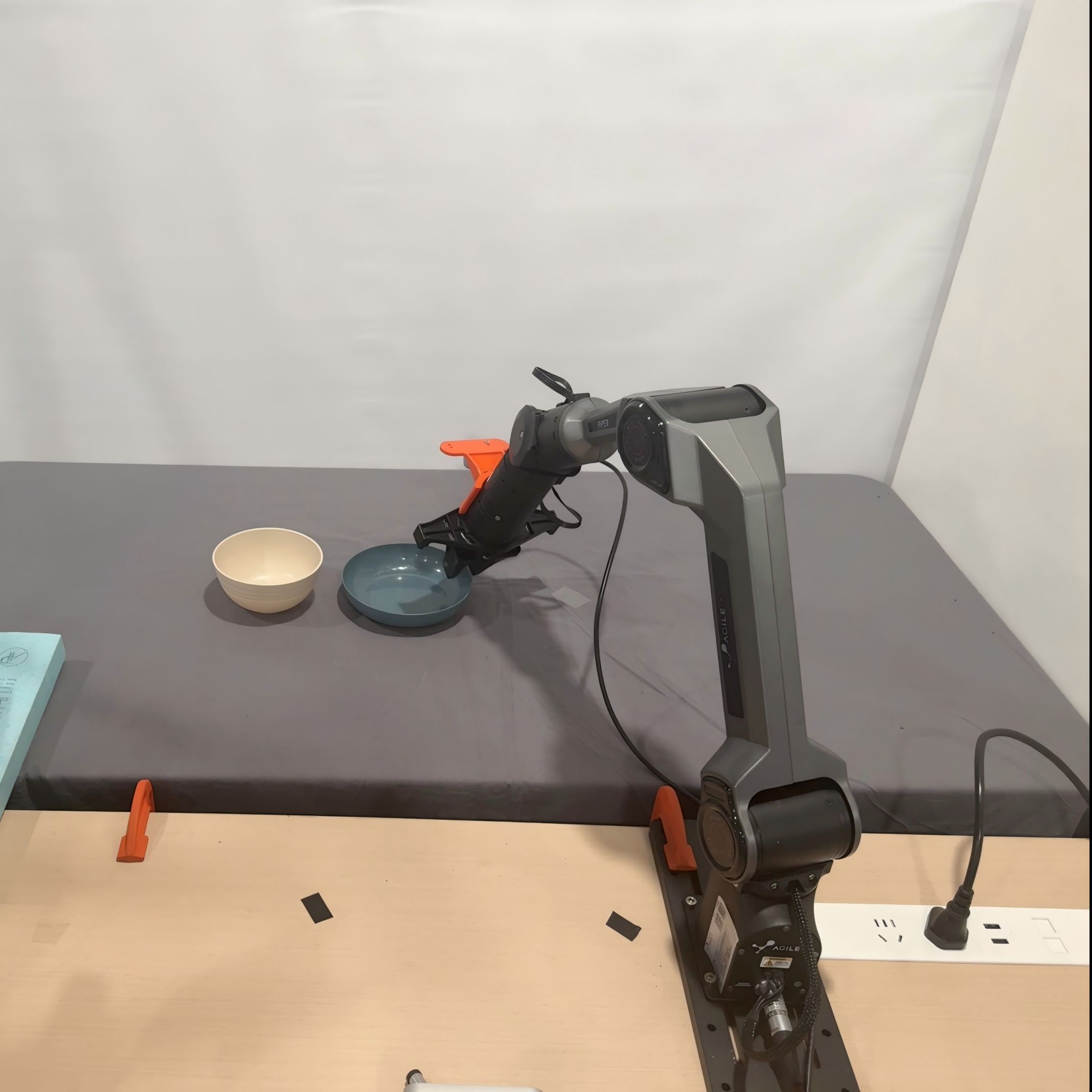}
        \caption{Place plate on center}
        \label{fig:step2}
    \end{subfigure}
    \hfill
    % --- 第三张图：移向并夹取碗 ---
    \begin{subfigure}[t]{0.19\textwidth}
        \centering
        \includegraphics[width=\linewidth]{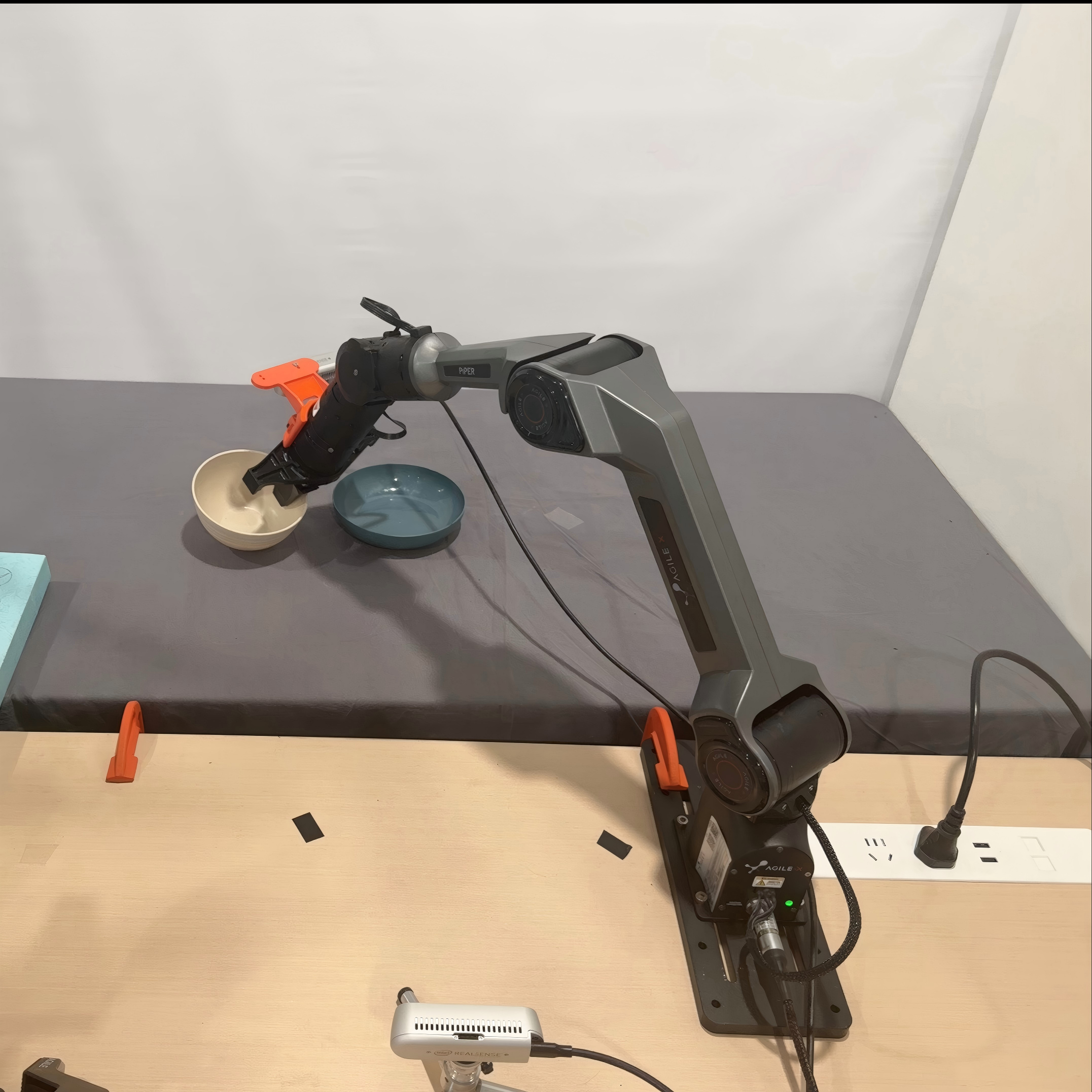}
        \caption{Approach and grasp bowl}
        \label{fig:step3}
    \end{subfigure}
    \hfill
    % --- 第四张图：将碗放在盘子上 ---
    \begin{subfigure}[t]{0.19\textwidth}
        \centering
        \includegraphics[width=\linewidth]{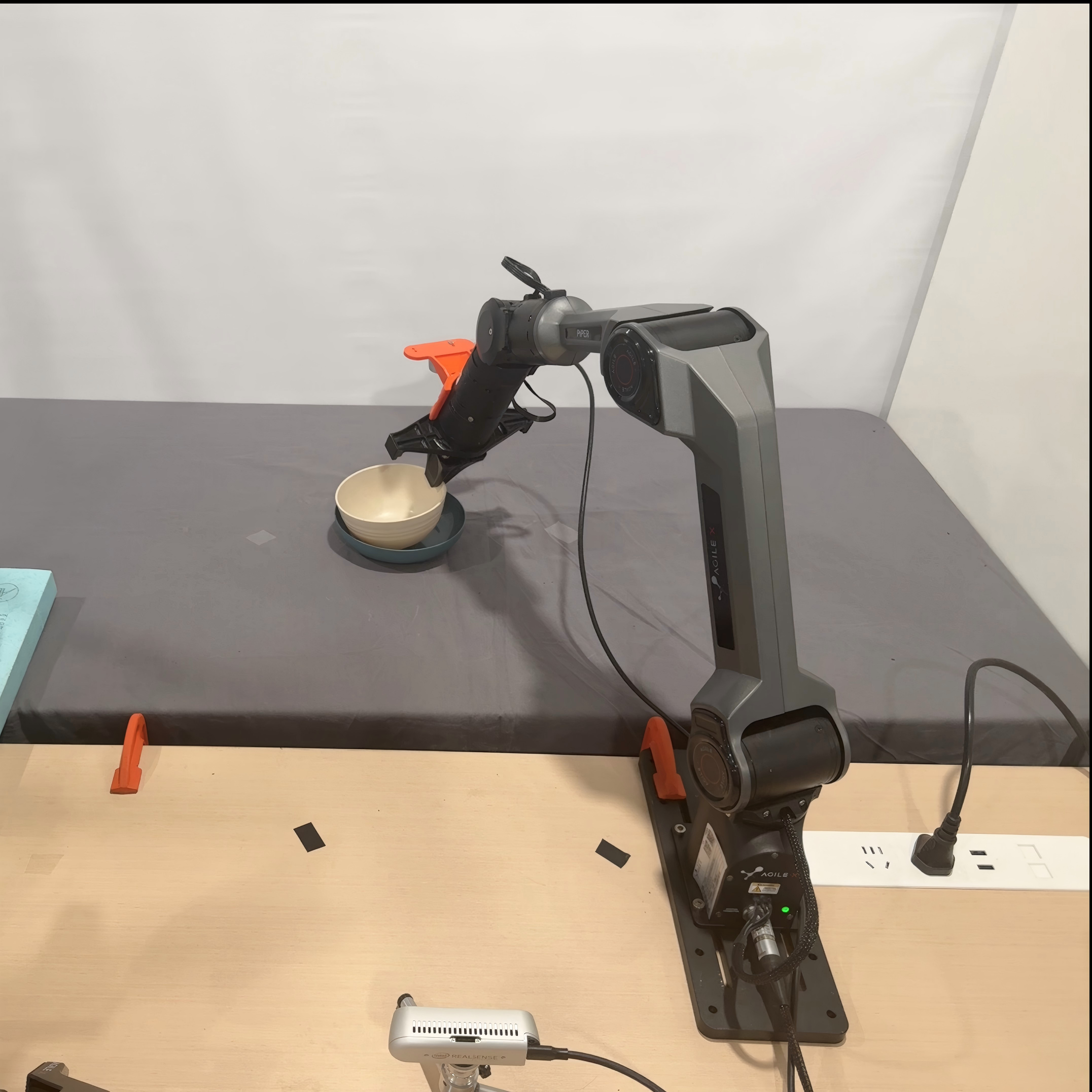}
        \caption{Place bowl on plate}
        \label{fig:step4}
    \end{subfigure}
    \hfill
    % --- 第五张图：回归原位 ---
    \begin{subfigure}[t]{0.19\textwidth}
        \centering
        \includegraphics[width=\linewidth]{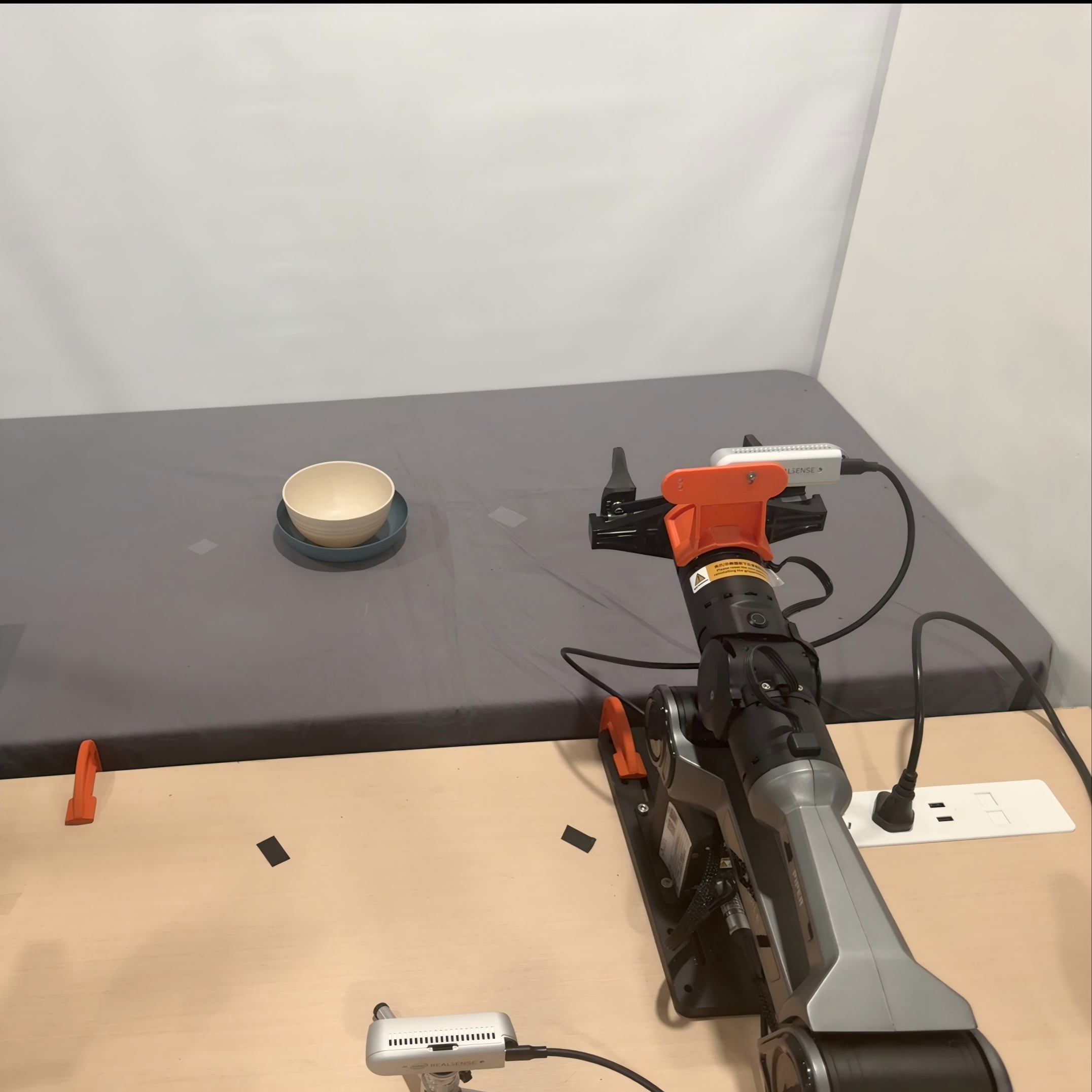}
        \caption{Return to home position}
        \label{fig:step5}
    \end{subfigure}

    \caption{Key-step snapshots of the "Place Bowl (Test)" task during real-world inference.}
    \label{fig:place_bowl_steps}
\end{figure*}

\newpage
\section{Proof of Upper Bound on Coincidence Count for Logarithmic Hindsight Sampling}
\label{proof}
\input{Proof}

\newpage
\section {Visualization of Channel-wise Temporal Encoding}
\label{mhi}
The image hindsight effectively represents the temporal evolution of the task within the latent space. A notable phenomenon in these visualizations is the variation in the morphology of the feature "trails." In the LIBERO environment, which operates at a lower sampling frequency of $10\text{Hz}$, the processed output exhibits elongated trails due to the larger inter-frame displacement. Conversely, in the real-world experiments conducted at $30\text{Hz}$, the encoding appears more compact and remains closely aligned with the object boundaries. Rather than being visual artifacts, these variations in trail length and distribution serve as a direct mapping of the object's motion dynamics, such as velocity and trajectory. This demonstrates that the CTE module successfully integrates the temporal progression of the task into the current visual representation.
\input{assets/HMI}

\end{document}

%% file: 01-Introdution.tex
\section{Introduction}
% The convergence of Vision-Language Models (VLMs)~\cite{li2022blip,driess2023palm,liu2023llava} and large-scale robotic datasets has established Vision-Language-Action models (VLAs) as a foundational framework for embodied agents. 
Vision-Language-Action models (VLAs) serve as a foundational paradigm for embodied agents by grounding language instructions into executable robotic control tokens~\cite{zitkovich2023rt,yu2025survey}.
Conventional VLAs~\cite{{black2024pi_0, liu2025hybridvla, kim2025openvla}} primarily rely on limited traceback observations, which are insufficient for long-horizon embodied tasks in which current actions depend causally on historical context~\cite{songhistory, lin2025hif}. 
Consequently, incorporating temporal context into VLA backbones is crucial to allow embodied agents, such as robotic manipulators, to achieve reliable performance in real-world executions.

Existing temporal modeling or inference in VLAs focuses on the reasoning capabilities of models given historical observations by prompting visual frames in VLM input context~\cite{zhengtracevla,mees2024octo,li2024towards,cheang2024gr} or predicting future subgoals to guide action generation~\cite{zhao2025cot,tian2024predictive,zhang2025up}. 
% To endow VLAs with temporal modeling capabilities for robotic manipulation, the straightforward approach is to directly input the entire sequence of historical frames. 
% However, this method leads to a token explosion, resulting in prohibitive computational costs that hinder the real-time control required for inference. 
Although they utilize the temporal information to enable VLAs understand the reasoning process of action generations, the encoding of history frames and state traces has to face a new efficiency problem. It is the token explosion issue, which results in prohibitive computational costs that hinder the real-time control required for inference.
Therefore, to balance efficiency and the utilization of historical context, recent research primarily falls into two categories: 
sparse representation learning~\cite{zhengtracevla, zhang20254d} and fixed-frequency sampling methods~\cite{shi2025memoryvla, lin2025hif}. 
% trajectory reconstruction strategies and 
While the former attempts to compress data through sparse representations, it relies on offline preprocessing, which limits the deployment flexibility of the model in robotic manipulation. 
In contrast, fixed-frequency sampling overlooks the non-uniform information density inherent in embodied tasks.
For example, the sampling frequency needed to understand task progress differs substantially from that needed to capture the robot’s proprioceptive state. 
% By aligning multimodal representations, VLA models enable the transformation of language instructions into executable robotic control tokens.
% To accommodate the complexity of long-horizon tasks in real-world physical environments, the traditionally instantaneous-observation-centered VLA frameworks, which often lack historical context, are naturally evolving toward temporal-aware architectures. 
% This evolution is crucial because effectively capturing these temporal dependencies 
% By effectively 
% However, a fundamental issue remains in extending these models beyond
% VLA models must incorporate long-horizon task planning that effectively captures temporal dependencies. 
% To this end, leveraging temporal context to model task progression is essential for embodied agents, such as robotic manipulators, 

\begin{figure}[!t]
  \begin{center}    \centerline{\includegraphics[width=\columnwidth]{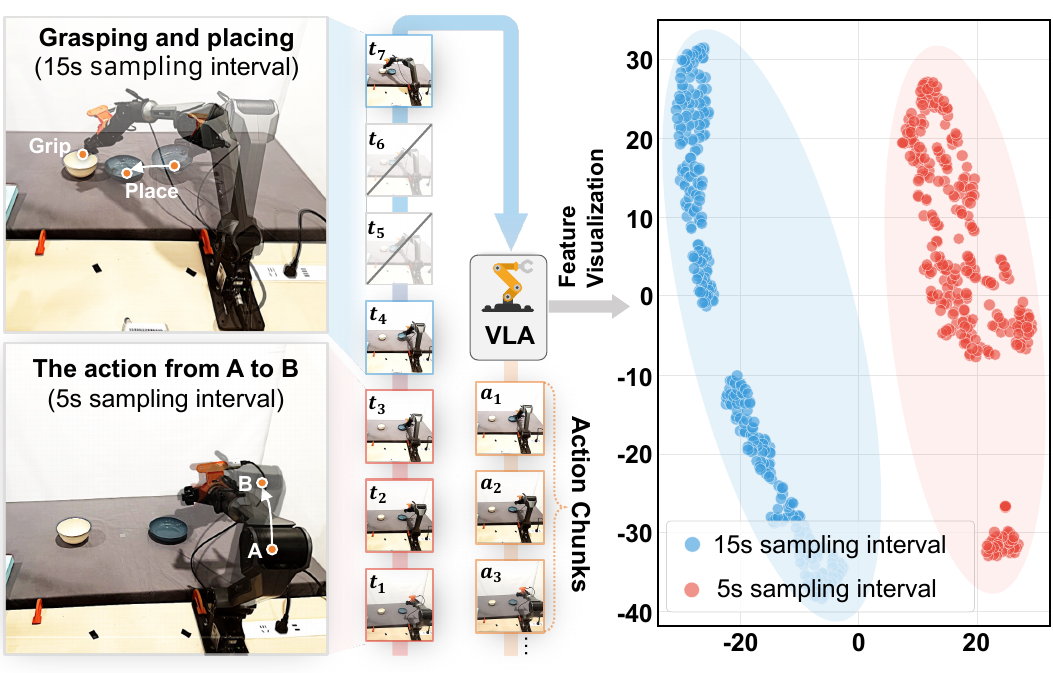}}
    \caption{Action chunks generated by VLA, and visualization of features extracted by VLA at different sampling intervals.}
    \label{fig:intro-figure}
  \end{center}
  \vspace{-0.6cm}
\end{figure}

%In this paper, we focus on \textit{how to achieve comprehensive temporal modeling with efficient inference to ensure precise robotic manipulation}.

% Pretrained Vision-Language Models (VLM) \cite{li2022blip,driess2023palm,liu2023llava} have established a unified framework for embodied intelligence by aligning linguistic and visual modalities, while the proliferation of open-source robotic datasets has further enabled data-driven control. Although Vision-Language-Action (VLA) models excel in cross-task semantic generalization, their dominant architectures remain constrained by an implicit Markov assumption, relying solely on instantaneous observations. This conflicts with the inherently Non-Markovian nature of complex robotic manipulation, where current states often derive from long-term causal accumulations. To mitigate this, existing approaches typically employ dense stacking of recent frames; however, we argue this addresses only the surface. By conflating physical inertia with true long-horizon episodic memory, this shortsighted strategy merely leverages short-term history to infer velocity and acceleration, ensuring kinematic smoothness and local action dependency. Without deep historical horizons, models lack effective representations of state evolution and event triggering. Consequently, reasoning degenerates into a passive response to current percepts, failing to track whether key states have changed. This lack of state traceability results in redundant actions and decision biases, ultimately undermining task-level structure and coherence in long-horizon tasks.

To demonstrate the non-uniform information density inherent within embodied tasks, we investigate the latent characteristics of robotic manipulation data. 
As shown in Figure~\ref{fig:intro-figure}, our empirical analysis reveals that representations derived from high and low sampling frequencies form distinct clusters within the feature space.
% exhibiting significant distributional divergence. 
This phenomenon suggests that high-frequency frames are primarily associated with fine-grained motion control, such as changes in proprioceptive states, whereas low-frequency frames correspond to the understanding of task context, such as the progression of subtasks. 
Based on the above observation, we argue that by integrating information from gradual-frequency sampling, the VLA model can enhance its comprehensive temporal perception, enabling the robot to execute high-frequency actions, while maintaining robust performance in long-horizon embodied tasks.
 
Based on these insights, this work aims to design an effective and efficient temporal VLA framework for robotic perception and inference, where there are two main challenges that need to be addressed:
First, \textit{how to efficiently encode sampled robotic frames on the perception side?} 
Despite gradual-frequency sampling, significant redundancy from static backgrounds and repeated features remains, requiring the model to extract key manipulation dynamics within a limited computational budget. 
Secondly, \textit{how to reduce the computational load of maintaining historical information during inference?}
Incorporating historical context at each step introduced additional retrieval and computation overhead, resulting in latency that undermines the real-time requirements of robotic control.

% While extending the temporal horizon is critical for capturing causal cues—such as verifying button activation or tracking object occupancy from seconds prior—efficiently processing long-horizon multimodal data remains a significant challenge. Naively stacking raw image frames to expand the memory window encounters irreducible bottlenecks. The linear accumulation of frames results in an explosion of visual tokens, which, coupled with the quadratic complexity of self-attention mechanisms, incurs prohibitive computational costs that severely constrain the achievable temporal context under limited memory resources. Long-horizon observations are typically dominated by static background pixels, creating a low signal-to-noise ratio where visual redundancy obscures sparse, task-relevant dynamics, making it difficult to distill decision-critical state transitions from the dense visual stream. The continuous maintenance of these long historical sequences during online inference imposes a heavy computational burden; this increased latency compromises the model's ability to meet the strict low-latency demands of high-frequency closed-loop control, inevitably leading to control lag and execution incoherence. 

% This status raises a critical research question: Can we design an efficient training and inference paradigm that retains long-horizon historical memory to capture physical dynamics while maintaining high-frequency real-time control responses, without introducing heavy computational burdens?

To deal with these challenges, we propose an efficient temporal \underline{\textbf{V}}ision-\underline{\textbf{L}}anguage-\underline{\textbf{A}}ction model with \underline{\textbf{Fib}}onacci sampling, called $\mathtt{FibVLA}$, which employs a flow matching generative strategy.
Specifically, $\mathtt{FibVLA}$ introduces logarithmic hindsight sampling at the input stage to effectively capture both high-frequency actions and low-frequency proprioception states. 
Furthermore, during the encoding stage, to efficiently represent the sampled historical frames, we design a channel-wise temporal encoding that leverages temporal features along the channel dimension to filter out irrelevant background in the robot’s historical frames, thereby reducing token noise.
For the action expert, we introduce the flow matching method to generate action distributions rather than discriminative action sequences.
Based on the action distributions, we design a Fibonacci recurrent inference strategy in the inference stage that exploits the recursive property of the Fibonacci sequence to align logarithmically sampled frames with preceding action chunks. 
% \linli{The term ``chunk" should be illustrated in Figure 1 or at least introduced in the content to describe the demonstration of data characteristics.}
Since the frames required for inference at the current timestamp are already included within the previous action chunk, avoiding redundant encoding of historical frames reduces inference overhead. Our contributions are summarized as follows:
\begin{itemize}[leftmargin=*]
    \item We propose $\mathtt{FibVLA}$, a framework that concentrates on the long context perception for efficient temporal modeling in VLAs. It develops logarithmic hindsight sampling with Fibonacci sampling to mathematically enable the direct reuse of prior action chunks during inference.
    \item We introduce channel-wise temporal encoding methods to capture background-invariant historical frames and a Fibonacci recurrent inference strategy aligned with preceding action chunks, substantially reducing computational overhead in temporal VLA inference. 
    \item We validate $\mathtt{FibVLA}$'s superior performance on four simulation benchmarks. Moreover, we collect a real-world dataset with more than 600k frames across 15 tasks, including single/dual-arm and long/short-horizon robotic manipulation, demonstrating the effectiveness and efficiency of $\mathtt{FibVLA}$ in the physical world.
\end{itemize}

%% file: 02-Related_works.tex
\section{Related Work}
\begin{figure*}[!ht] 
    \centering
    \includegraphics[width=0.95\textwidth]{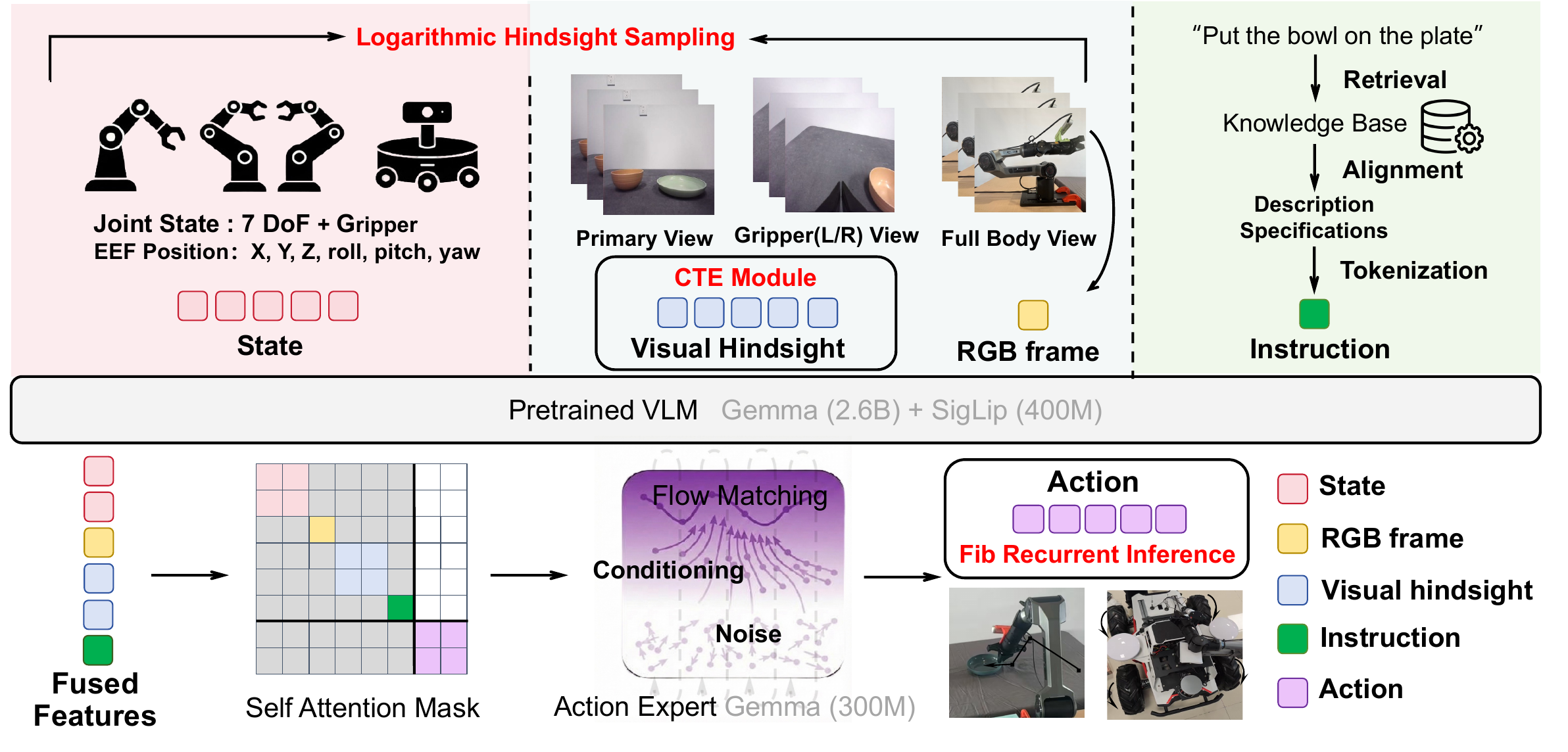} 
    \caption{The overview of $\mathtt{FibVLA}$, consisting of Logarithmic Hindsight Sampling, Channel-wise Temporal Encoding Module, and Fibonacci Recurrent Inference.
    % Our model unifies visual, language, and state modalities within a pretrained VLM backbone. It leverages {Logarithmic Hindsight Sampling} for temporal dependencies, processes visual features via a {Channel-wise Temporal Encoding Module}, and generates actions using {Fibonacci Recurrent Inference}.
    }
    \label{fig:arch}
    \vspace{-0.3cm}
\end{figure*}
\subsection{Vision-Language-Action Models}
% In the field of embodied AI, large language models have shown growing attractions and convincing capabilities by multimodal adaptions, which are referred to as vision-language models (VLMs) \cite{}. \xwj{\textbf{repetition,} 
To bridge the gap between the language inference of VLMs and the manipulation policies required for the intuitive human-robot interface, vision-language action models (VLAs)~\cite{yu2025survey,shao2025large,ma2024survey} are explored to provide a direct path for processing information from vision, language, and action modalities.
VLAs utilize visual foundation models~\cite{dosovitskiyimage,zhai2023sigmoid,radford2021learning} and LLMs to align vision and language embeddings, which are further trained on 
large-scale robot datasets~\cite{o2024openx,walke2023bridgedata,khazatsky2024droid} to perform language-conditioned tasks. 
Recent works achieve success using either autoregressive discretization (e.g., RT-1~\cite{brohan2023rt}, RT-2~\cite{zitkovich2023rt}, and OpenVLA~\cite{kim2025openvla}) or diffusion-based continuous denoising, such as $\pi_0$~\cite{black2024pi_0}, CogACT~\cite{li2024cogact}, and HybridVLA~\cite{liu2025hybridvla}.
However, these methods often operate under the implicit assumption of the Markovian state~\cite{jang2022bc-z,haarnoja2018soft,schulman2017ppo}, relying on instantaneous observations and overlooking the temporal dependencies arising from historical visual and proprioceptive states required for complex embodied tasks. 
For example, in tasks involving object occlusion, the scarcity of historical context prevents VLA models from recovering the last-known positions of target objects, thereby impeding successful task execution.

% Research on VLA models has evolved from foundational control paradigms to advanced generative architectures. Early works, notably RT-1 \cite{brohan2023rt} and RT-2 \cite{zitkovich2023rt}, established a general-purpose framework leveraging Transformer-based backbones and pre-trained VLMs, a direction further standardized by OpenVLA \cite{kim2025openvla}. Subsequent focus shifted to efficient generative models: Octo pioneered the integration of diffusion policies, while RDT-1B \cite{liurdt} adopted a Diffusion Transformer \cite{peebles2023dit} backbone with alternating condition injection for deep multimodal fusion. Addressing real-time control, $\pi_0$ \cite{black2410pi0} introduced a flow-matching-based "action expert" for high-frequency, non-autoregressive parallel action chunking, while LAPA \cite{yelapa} demonstrated the utility of latent action extraction from large-scale unlabeled video. Recent trends favor hierarchical architectures that balance high-level semantic understanding with precise low-level execution; exemplified by GR00T N1, these multi-stage designs integrate diverse generative mechanisms to enhance long-horizon task planning and execution, aiming to improve scalability and generalization.

\subsection{VLA Models Capturing Temporal Information}
To address partial observability in long-context robotic manipulation, VLAs have evolved from basic pixel-stacking to integrated historical contexts~\cite{li2023roboflamingo}. While these advancements enhance the capacity of VLA models to capture temporal dependencies, they are increasingly bottlenecked by both the token explosion and the prohibitive computational overhead associated with long-horizon sequences~\cite{jiang2022vima,dao2022flashattention}. 
Although representative methods, such as TraceVLA~\cite{zhengtracevla} and 4D-VLA~\cite{zhang20254d}, employ trajectory or 4D reconstructions for data reduction, their reliance on intensive offline preprocessing severely compromises real-time responsiveness and deployment flexibility. 
Furthermore, existing memory-based~\cite{shi2025memoryvla} and predictive strategies~\cite{lin2025hif} utilize fixed-frequency sampling, overlooking the temporal non-uniformity of robotic manipulation.
In practice, critical signals are often sparse and concentrated within irregular temporal windows~\cite{james2022coarse,mandlekar2022matters}, yet current models expend resources on redundant frames, thereby risking the loss of pivotal transient information~\cite{feichtenhofer2019slowfast}. 
Thus, adaptively extracting essential historical information while maintaining inference efficiency and accuracy remains a fundamental issue in VLA temporal modeling.

%% file: 03-Method.tex
\section{Methodology}

\subsection{ Problem Formulation and Overview}

\subsubsection{Problem Formulation}
We formulate the robotic manipulation task as a sequential decision-making process. At current timestamp $t$, the system receives a natural language instruction $\mathcal{L}$, a current visual observation $o_t \in \mathbb{R}^{H \times W \times 3}$ with the resolution of $H \times W$, and a proprioceptive state $s_t \in \mathbb{R}^{d_s}$. The objective is to predict a future action chunk $\mathcal{A}_{t:t+L}=\{a_t, a_{t+1}, \dots, a_{t+L}\}$, where $L$ denotes the action chunk length and each action $a_t \in \mathbb{R}^{d_a}$. The policy $\pi$ relies solely on the instantaneous observation, modeled as:
\begin{equation}
    \mathcal{A}_{t:t+L} \sim \pi(\cdot \mid \mathcal{L}, o_t, s_t)
\end{equation}
To capture temporal dynamics, a standard approach extends this by conditioning on a continuous history window. Let $T$ represent the lookback window size. We define the continuous visual history as $\mathcal{O}_T = \{o_t, o_{t-1}, \dots, o_{t-T}\}$ (where $t > T$) and the corresponding state history as $\mathcal{S}_T$. The policy is further formalized as:
\begin{equation}
    \mathcal{A}_{t:t+L} \sim \pi(\cdot \mid \mathcal{L}, \mathcal{O}_T, \mathcal{S}_T)
\end{equation}

\subsubsection{Overview}
In this section, we propose the $\mathtt{FibVLA}$ framework based on flow matching generation to achieve efficient temporal modeling. 
At the input stage of $\mathtt{FibVLA}$,
% While the continuous history window $\mathcal{O}_T$ incorporates temporal information, it often introduces redundancy. To address this, we propose a method based on a non-continuous, sparse history sequence.
we introduce a monotonically increasing sparse temporal index set $\mathcal{K} = \{k_1, \dots, k_N\}$, containing $N$ sampled frames, which is used to construct the observation set $\mathcal{O}_{\mathcal{K}}$ and the corresponding state set $\mathcal{S}_{\mathcal{K}}$. 
% \begin{equation}
%     \mathcal{O}_{\mathcal{K}} = \{o_{t-k} \mid k \in \mathcal{K}, t-k \ge 0\}.
% \end{equation}
Our proposed $\mathtt{FibVLA}$, parameterized by $\theta$, generates the action chunk by conditioning on these non-uniformly sampled frames as $\mathcal{A}_{t:t+L}= \pi_\theta(\cdot \mid \mathcal{L}, \mathcal{O}_{\mathcal{K}}, \mathcal{S}_{\mathcal{K}})$. 
The overall architecture of the $\mathtt{FibVLA}$ framework is shown in Figure~\ref{fig:arch}. This framework unifies the processing of multimodal inputs, including language instructions, multi-view visual frames, and state information. All inputs are first projected into a unified token sequence and then fed into the PaliGemma backbone~\cite{beyer2024paligemma}. Specifically, the input first leverages logarithmic hindsight sampling to capture temporal dependencies in both visual frames and state, while the sampled visual frames are further processed by a channel-wise temporal encoding (CTE) Module. Then, the encoded multimodal features, guided by a customized prefix attention mask, are employed to generate the action distributions through a flow-matching-based action expert. Finally, the robotic action is precisely generated using the Fibonacci recurrent inference strategy. 

\subsection{Logarithmic Hindsight Sampling}
\label{LHS}
Based on the analysis in Figure~\ref{fig:intro-figure}, the VLA model enhances full-scale temporal perception in embodied tasks by integrating gradual-frequency sampling while balancing computational overhead.
% To mitigate temporal myopia, models require access to longer historical contexts. However, using long continuous windows is computationally prohibitive, making sparse and discrete temporal sampling necessary. 
A common approach is Long-Short Term Hybrid Sampling~\cite{wang2016temporal,lin2019tsm}, which maintains multiple temporal buffers to densely sample distant past states with a fixed stride. Given that distant states change slowly and provide limited information gain, this method is inefficient in allocating the sampling budget.
% recent history while uniformly sampling 
% Although this strategy reduces computation to some extent, it suffers from notable limitations: it requires maintaining multiple temporal buffers, increasing system complexity, and it allocates sampling budget inefficiently, as distant states often evolve slowly and yield diminishing information gains. 
Ideally, the sampling density should decay naturally with temporal distance, following a dense-near, sparse-far pattern. Probabilistic Quantile Sampling~\cite{li2019enhancing,zhou2021informer} partially follows this intuition, but its sampling points do not align with discrete time steps and rely on distributional assumptions and quantile computations, leading to inconsistencies between training and inference.

\begin{figure}[!t]
  \begin{center}    \centerline{\includegraphics[width=\columnwidth]{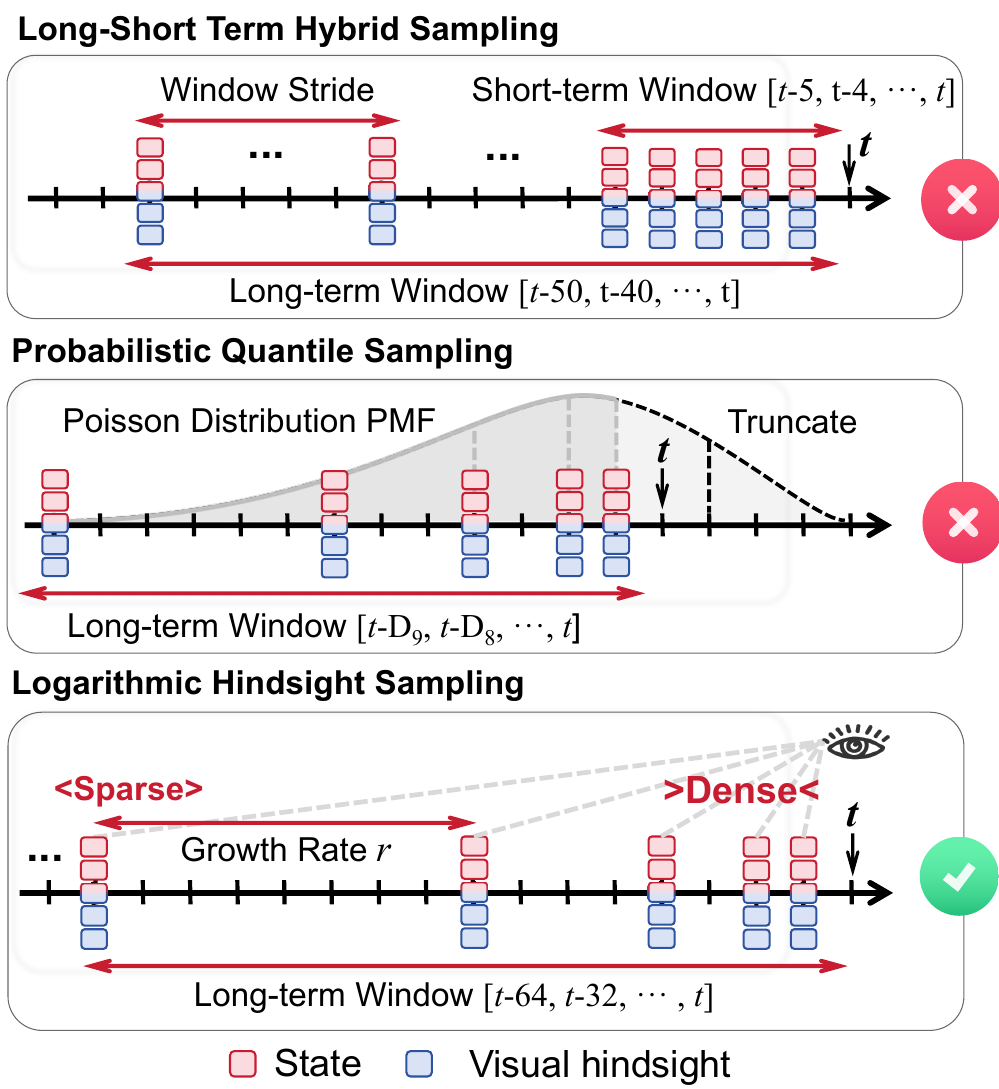}}
    \caption{Comparison among temporal sampling strategies.}
    \label{fig:sample}
  \end{center}
  \vspace{-1cm}
\end{figure}

To address the above issue, we propose Logarithmic Hindsight Sampling, which constructs a discrete set of temporal indices $\mathcal{K}$ on a logarithmic scale. This approach preserves high-frequency information for recent time steps while efficiently covering long-term history. Formally, given a minimum sampling interval $q_{min}$ and a growth rate $r > 1$, the sampling points are defined as:
\begin{equation}
k_i = \lfloor q_{min} \cdot r^i \rfloor
\label{eq:5}
\end{equation}
To mitigate index collisions caused by discretization artifacts (i.e., the floor operation), we impose a recursive sparsity constraint on the sequence:
\begin{equation}
k_i \ge k_{i-1} + k_{i-2}, \quad \forall i > 2
\label{eq:6}
\end{equation}
This constraint implicitly establishes a theoretical lower bound for the growth rate $r$, ensuring that the sampling sequence remains strictly monotonic after discretization and inherently eliminating redundancy. In practice, the growth rate can be flexibly adjusted according to the control frequency to balance long-term coverage with the precision required to capture critical state transitions.
% Motivated by these observations, we propose Logarithmic Hindsight Sampling. This method constructs a discrete and unique set of temporal indices on a logarithmic scale, preserving high-frequency for recent time steps while efficiently covering long-term history. Formally, given a minimum sampling interval $q_{\text{min}}$ and a growth rate $r>1$, the index set is defined as
% \begin{equation}
% \mathcal{K} = \operatorname{unique}\!\left(\left\lfloor q_{\text{min}} \cdot r^i \right\rfloor\right),
% \end{equation}
% where the floor operation aligns sampling points with discrete time steps and $\operatorname{unique}(\cdot)$ removes duplicates.

% To analyze information redundancy, we abstract the sorted index sequence $S=\{x_0, x_1, \dots\}$ with the following recursive sparsity constraint:
% \begin{equation}
% x_i \ge x_{i-1} + x_{i-2}, \quad \forall i \ge 2.
% \end{equation}
% This condition guarantees that the effective growth rate asymptotically exceeds the golden ratio $\phi \approx 1.618$, theoretically preventing integer collisions and rendering deduplication unnecessary in the ideal case. In practice, the growth rate can be adjusted according to the control frequency to avoid missing critical state transitions. The Fibonacci sequence can be viewed as a special-case approximation of logarithmic sampling. When combined with parallel action chunk generation, this design further improves inference efficiency; implementation details are discussed in Section~\ref{subsec:fib_inference}.

\subsection{Channel-wise Temporal Encoding}
\label{subsec:mhi_generation}
\begin{figure*}[t] %
    \centering
    \includegraphics[width=0.95\textwidth]{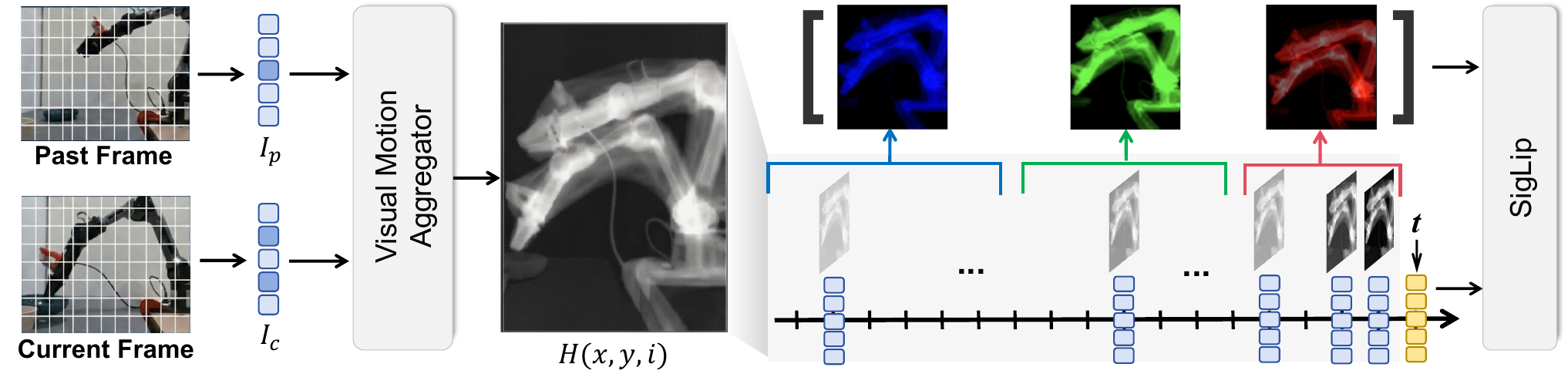} % 设置图片宽度
    \caption{Illustration of the channel-wise temporal encoding module. Specifically, $H(x, y, i)$ leverages spatial discrepancies between current and past frames to identify movement, 
    % enhancing dynamic regions and filtering out static backgrounds. 
    Historical frames are then integrated into the composite input via channel-wise mapping.
    }
    \label{fig:channel-wise}
    \vspace{-0.2cm}
\end{figure*}

While optimized sampling expands the temporal receptive field, scaling it to high-dimensional visual inputs remains challenging: long-horizon visual frames are often dominated by redundant backgrounds that mask critical task dynamics~\cite{tong2022videomae}, and Transformer self-attention incurs a computational overhead that scales quadratically with sequence length~\cite{dao2022flashattention}. To overcome these issues, we introduce a channel-wise temporal encoding module to compress discrete temporal dynamics into a compact representation. Specifically, the CTE module encodes temporal dynamics by using the visual motion aggregator to process sampled sparse frames $\mathcal{O}_{\mathcal{K}}$ based on the temporal index set $\mathcal{K}$. 
First, the motion difference $D(x,y,i)$ for each $i \in [1, N-1]$ is derived via frame differencing on the sampled visual frames $I(x,y,t)$:
\begin{equation}
D(\cdot,i) = |I(\cdot, t - k_i) - I(\cdot, t - k_{i+1})|
\label{eq:diff_image}
\end{equation}
Next, a binary motion mask $\Psi(x,y,i)$ is obtained using a predefined threshold $\xi$:
\begin{equation}
\Psi(\cdot,i) =
\begin{cases}
1, & \text{if } D(\cdot,i) > \xi \\
0, & \text{otherwise}
\end{cases}
\label{eq:mhi_binary}
\end{equation}
Then, the temporal encoding frame $H(x,y,i)$ is generated through the following recursive logic:
\begin{equation}
H(\cdot,i) =
\begin{cases}
\tau, &\text{if } \Psi(\cdot,i) = 1 \\
\max(0, H(\cdot,i+1) - \delta),  &\text{otherwise}
\end{cases}
\label{eq:mhi_update}
\end{equation}
where $\tau$ denotes the maximum intensity duration, and $\delta$ is a decay parameter. This recursive process proceeds in the direction of decreasing temporal lag. The final output, $H(x, y, 1)$, serves as the processed motion history image~\cite{ahad2012motion}. In this representation, brighter pixels correspond to recent motion, while darker pixels retain earlier motion traces, thereby forming a ``visual trail'' that provides explicit temporal cues.

To fully leverage multi-scale temporal features obtained through gradual-frequency sampling, the CTE module divides the visual history into three temporal ranges: Near, Mid, and Far. As shown in Figure~\ref{fig:channel-wise}, features from these ranges are respectively mapped to the R, G, and B channels of the PaliGemma visual encoder (SigLip), forming the hindsight feature $\hat{o}_t$ (see Appendix~\ref{mhi} for examples).  
Finally, the current RGB frame $o_t$ is retained as a semantic anchor and combined with $\hat{o}_t$ as input to the visual encoder, thereby integrating the embodied task’s temporal information with the scene’s color and texture details.

\subsection{Fibonacci Recurrent Inference}
\label{subsec:fib_inference}
During the VLA inference stage, the sampled $\mathcal{K}$ based on Eq.(\ref{eq:6}) causes misalignment between the frames used to infer the next action chunk and the historical frames when $k_i>k_{i-1}+k_{i-2}$, as shown in Figure~\ref{fig:lazy_inference}.
% when considering temporal information, the normal logarithmic sampling method results in the frames required for inferring the next action chunk being misaligned with the historical frames. 
This misalignment prevents the direct reuse of historical feature tokens, forcing the model to re-encode temporal information at every step and thereby limiting inference speed. 
In contrast, when $k_i=k_{i-1}+k_{i-2}$, the sampling strategy perfectly aligns with the additive recursive property inherent to the Fibonacci sequence, allowing historical feature tokens to be reused during the inference of the next action chunk.  
% To maximize the reuse of temporal encoding information for each historical frame during inference, 
Thus, we propose a Fibonacci recurrent inference strategy, leveraging the inherent additive recursive property of the Fibonacci sequence. We prove that this strategy provides the unique analytical solution for maximizing historical information reuse under sparse sampling constraints (proof detailed in Appendix~\ref{proof}).
% To further compress closed-loop inference latency while maintaining long-horizon perception, we leverage the inherent additive recurrence property of the Fibonacci sequence to propose the Fibonacci Recurrent Inference Mechanism. Building upon the sparse sampling set $\mathcal{K}$ defined previously (Eq.~\ref{eq:5} and Eq.~\ref{eq:6}), we emphasize that the Fibonacci sequence is not merely a logarithmic sampling strategy adhering to the ``dense-near, sparse-far'' principle, but fundamentally stands as the unique optimal solution for maximizing the reuse of historical information under sparse sampling constraints (proof provided in Appendix~\ref{proof}).
\begin{figure}[!t]
  \begin{center}
  \centerline{\includegraphics[width=\columnwidth]{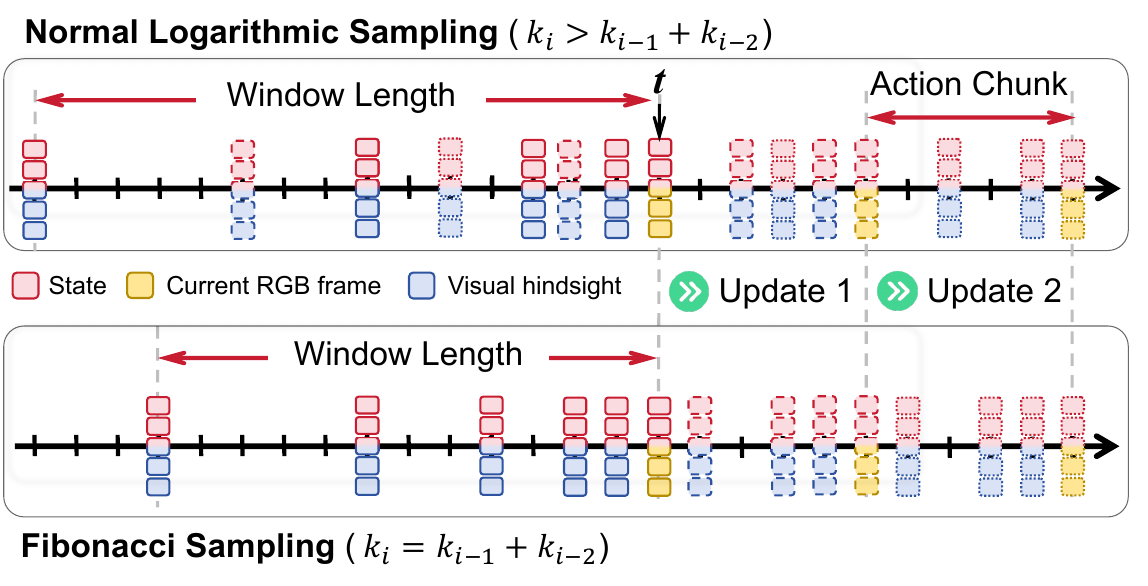}}
    \caption{Schematic of Fibonacci Recurrent Inference.}
    \label{fig:lazy_inference}
  \end{center}
  \vspace{-0.8cm}
\end{figure}

% To fully exploit this theoretical property and avoid redundant sampling, we seek a strict alignment between the action chunk length $L$ and the sparse sampling sequence $\mathcal{K}$. 
Let the system receive visual input $\mathcal{O}_{\mathcal{K}}^t$ at time $t$ and execute an action sequence of length $L$, advancing the system time to $t' = t + L$. Utilizing the Fibonacci property $k_i = k_{i-1} + k_{i-2}$, and setting the action update step to $L = k_{i-2}$, the historical frames at time $t$ and the sampling points at time $t'$ satisfy the following equality:
\begin{equation}
    (t + k_{i-2}) - k_i = t - (k_i - k_{i-2}) = t - k_{i-1}
    \label{eq:fib_alignment}
\end{equation}
Eq.(\ref{eq:fib_alignment}) indicates that by setting the update step (i.e., action chunk length) to $L = k_{i-2}$, the historical frame located $k_{i-1}$ steps before time $t$ corresponds to the frame located  $k_i$ before time $t'$. 
% Consequently, short-term history frames from the previous timestamp evolve into long-term history frames at the current timestamp. 
Furthermore, for sampling point $k_{i-3}$ required at $t'$, its effective timestamp satisfies $t' - k_{i-3} = t + (k_{i-2} - k_{i-3}) > t$, placing it strictly after the previous sampling time $t$. This confirms that all high-frequency updates occur strictly within the newly elapsed window $[t, t+L]$, thereby preserving the integrity of the historical structure without any misaligned sampling points.

Based on this derivation, we dynamically bind the action chunk length to specific Fibonacci terms, enabling the feature tokens in the KV cache to be precisely aligned with the sampling points at the next time step. As the sampling depth increases, the ratio between adjacent sampling intervals converges to the golden ratio, naturally maintaining a logarithmic sampling distribution.

%% file: 04-Experiment.tex
\section{Experiment}
\subsection{Experiment Setups}
We have conducted a comprehensive series of experiments to evaluate the effectiveness of $\mathtt{FibVLA}$, including:  
1) Three simulated benchmark tests to measure its manipulation performance across diverse tasks.
2) Robotic deployment experiments in real-world environments.
3) Ablation studies to verify the contribution of each module. 
4) Efficiency analysis to assess the model’s inference performance.
% We evaluate the effectiveness of $\mathtt{FibVLA}$ through an extensive suite of experiments. Our evaluation begins with multiple simulation benchmarks to measure fundamental manipulation performance across diverse tasks. To verify the model's cross-domain adaptability, we further conduct real-world robotic deployments in unstructured environments. Additionally, we perform comparative timing analyses to assess inference efficiency under various sampling strategies. Finally, detailed ablation studies are provided to isolate and analyze the individual contributions of each core module.

\noindent \textbf{Benchmarks \& Datasets.} 
We employ comprehensive benchmarks to evaluate task generalization and long-horizon manipulation capabilities. 1) Diverse simulation benchmarks, including \textbf{LIBERO}~\cite{liu2023libero} and \textbf{MIKASA-Robo}~\cite{cherepanov2025mikasa}, alongside large-scale policy evaluations on the \textbf{Bridge}~\cite{walke2023bridgedata} and \textbf{Fractal}~\cite{brohan2023rt} via SimplerEnv~\cite{li2025simpler}. 
2) \textbf{Real-world dataset}, which is collected from a physical platform built on the Piper robotic arm, comprising over 600k frames of data across 15 tasks, including single/dual-arm and long/short-horizon robotic manipulation. Further details on the specific configurations for these benchmarks and datasets are available in Appendix~\ref{sec:appendix_benchmarks} and Appendix~\ref{sec:rubric}.

\noindent \textbf{Baselines.} To verify the effectiveness of $\mathtt{FibVLA}$, we compare it against a range of representative state-of-the-art VLA models. These baselines are primarily categorized into three types: 1) Classic VLM-based policies, including 
\textbf{RT-1-X}~\cite{brohan2023rt}, 
\textbf{RT-2-X}~\cite{zitkovich2023rt}, \textbf{OpenVLA}~\cite{kim2025openvla}, {and \textbf{OpenVLA-OFT}~\cite{kim2025fine}}. 2) Generative policies based on diffusion or flow matching, such as \textbf{Octo}~\cite{ghosh2024octo} and 
$\mathbf{\pi_0}$~\cite{black2024pi_0}. 3) VLA variants focusing on enhanced spatio-temporal reasoning and cognitive planning, encompassing 
\textbf{CogACT}~\cite{li2024cogact}, 
\textbf{TraceVLA}~\cite{zhengtracevla}, \textbf{SpatialVLA}~\cite{qu2025spatialvla}, 
\textbf{4D-VLA}~\cite{qu2025spatialvla}, and 
\textbf{HiF-VLA}~\cite{lin2025hif}. For detailed descriptions of all baselines and the implementation details of $\mathtt{FibVLA}$, please refer to Appendix~\ref{sec:appendix_baselines} and Appendix~\ref{FibVLA_Hyperparameter}, respectively. 
% We compare FibVLA against representative state-of-the-art robotic learning models to verify its effectiveness. 
% \linli{we can list the names of all baselines here, and if space permits, categorize the methods.}
% These baselines are primarily categorized into three types: classic VLM-based policies, generative policies based on diffusion or flow matching, and recent VLA variants focusing on enhanced spatio-temporal reasoning. Please refer to Appendix~\ref{sec:appendix_baselines} for detailed descriptions of all compared methods.

\subsection{Simulation Evaluation}

\noindent \textbf{Evaluation on LIBERO.} As shown in Table~\ref{tab:libero}, $\mathtt{FibVLA}$ demonstrates competitive performance across all benchmarks. Notably, it achieves a significant improvement on the LIBERO-Long suite, outperforming the second-best baseline by {7.21\%} and reaching a success rate of 95.2\%.
This improvement is primarily attributed to $\mathtt{FibVLA}$’s gradual-frequency sampling, which effectively bridges the gap between capturing global task goals and local motion dynamics. In contrast, conventional policies often struggle with these multi-stage tasks, as they tend to lose short-term precision while maintaining long-term context.
% While due to the challenges of maintaining long-term context without losing short-term precision, FibVLA achieves a 95.2\% success rate. 
(Detailed success rates for individual subtasks are provided in Appendix~\ref{sec:libero-long}). 
Beyond long-horizon scenarios, $\mathtt{FibVLA}$ also maintains high stability in basic manipulation tasks involving spatial, object, and goal-oriented operations, 
% yielding success rates of 97.8\%, 98.0\%, and 96.4\%, respectively. 
These results indicate that the temporal encoding module not only preserves the model’s spatial reasoning capabilities but may also enhance accuracy by improving temporal consistency in action outputs. 
Overall, $\mathtt{FibVLA}$ achieves an average success rate of 96.8\%, demonstrating its effectiveness as a general-purpose manipulation policy in handling both complex reasoning and basic execution tasks.
\input{table/LIBERO}

\input{table/Simpler-fractal}
\noindent \textbf{Evaluation on SimplerEnv-Fractal.} To validate the model's robustness against visual perturbations, we conduct a comprehensive evaluation on SimplerEnv-Fractal using two core protocols: Visual Matching, which aims to minimize the visual appearance gap between simulation and real world to assess transfer potential; and Visual Aggregation, serving as a stress test by introducing extensive environmental variations to evaluate adaptability to visual domain shifts. 
Table~\ref{tab:fractal} reports the overall performance: $\mathtt{FibVLA}$ achieves the 5.87\% improvement in average success rate compared to the second-best baseline, CogACT. 
Notably, under the rigorous Visual Aggregation setting, despite severe environmental changes, $\mathtt{FibVLA}$ maintains a stable average success rate of 65.5\% across all four subtasks without any significant weaknesses. 
This finding indicates that $\mathtt{FibVLA}$ goes beyond mere memorization of training data patterns. Instead, it leverages temporal context to comprehend task semantics, thereby maintaining stable control amid lighting variations and background distractions, effectively mitigating the action jitter or execution failures often observed in baseline models lacking temporal modeling.

\noindent \textbf{Evaluation on SimplerEnv-Bridge.} In the SimplerEnv-Bridge setting, we evaluate the model's manipulation capabilities on the WidowX robot, covering tasks ranging from basic pick-and-place to complex long-horizon manipulation. 
As shown in Table~\ref{tab:bridge}, $\mathtt{FibVLA}$ demonstrates superior performance on the Bridge benchmark, achieving an average success rate of 67.3\%. Compared to CogACT and the recent SOTA model $\pi_0$, $\mathtt{FibVLA}$ achieves significant performance gains of 3.12\% and 20.9\%, respectively. These results indicate that $\mathtt{FibVLA}$'s temporal module effectively models dynamic object interactions, substantially enhancing the overall execution accuracy of policies in Bridge tasks.
\input{table/Simpler-bridge}

\noindent \textbf{Evaluation on MIKASA-Robo.} To investigate the limits of $\mathtt{FibVLA}$ in handling memory-intensive tasks, we incorporate the MIKASA-Robo benchmark for validation. Distinct from the previously discussed long-horizon tasks that emphasize multi-step logical reasoning, MIKASA-Robo focuses on evaluating the model's capacity to retain object and spatial memory. 
In this benchmark, the environment only briefly presents the target configuration in the initial few frames as a cue, after which the scene resets to a default state. Consequently, the model must rely solely on the memory of the initial keyframes to reconstruct the target scene, as the target information is unavailable in the current observations.
As shown in Table~\ref{tab:mikasa}, $\mathtt{FibVLA}$ achieves superior performance with an average success rate of 46.5\%, outperforming $\pi_0$ by 40.9\%. 
Notably, the model demonstrates a significant performance advantage in specific subtasks that involve transient visual goals. This improvement is likely attributed to the design of the temporal module. In scenarios where target positions or task-relevant cues are only briefly visible at the onset, the gradual-frequency sampling mechanism prioritizes the retention of these critical initial states. By maintaining these historical visual features, $\mathtt{FibVLA}$ is better equipped to track objectives during the subsequent manipulation phase, effectively mitigating the goal-forgetting issues frequently observed in traditional VLA models.
\input{table/Mikasa}

\subsection{Real-world Evaluation}
We have conducted extensive evaluations on the Piper robotic platform, equipped with both third-person and wrist-mounted cameras, to validate the effectiveness and robustness of $\mathtt{FibVLA}$ in real-world settings. The model is fine-tuned on teleoperated demonstrations covering single/dual-arm coordination across various horizons. To move beyond binary success rates, we introduce a fine-grained step-wise scoring metric to evaluate manipulation quality and execution stability. 
The detailed definitions and task decompositions are provided in Appendix \ref{sec:rubric}.

\begin{figure}[htbp] %
    \centering
    \includegraphics[width=0.48\textwidth]{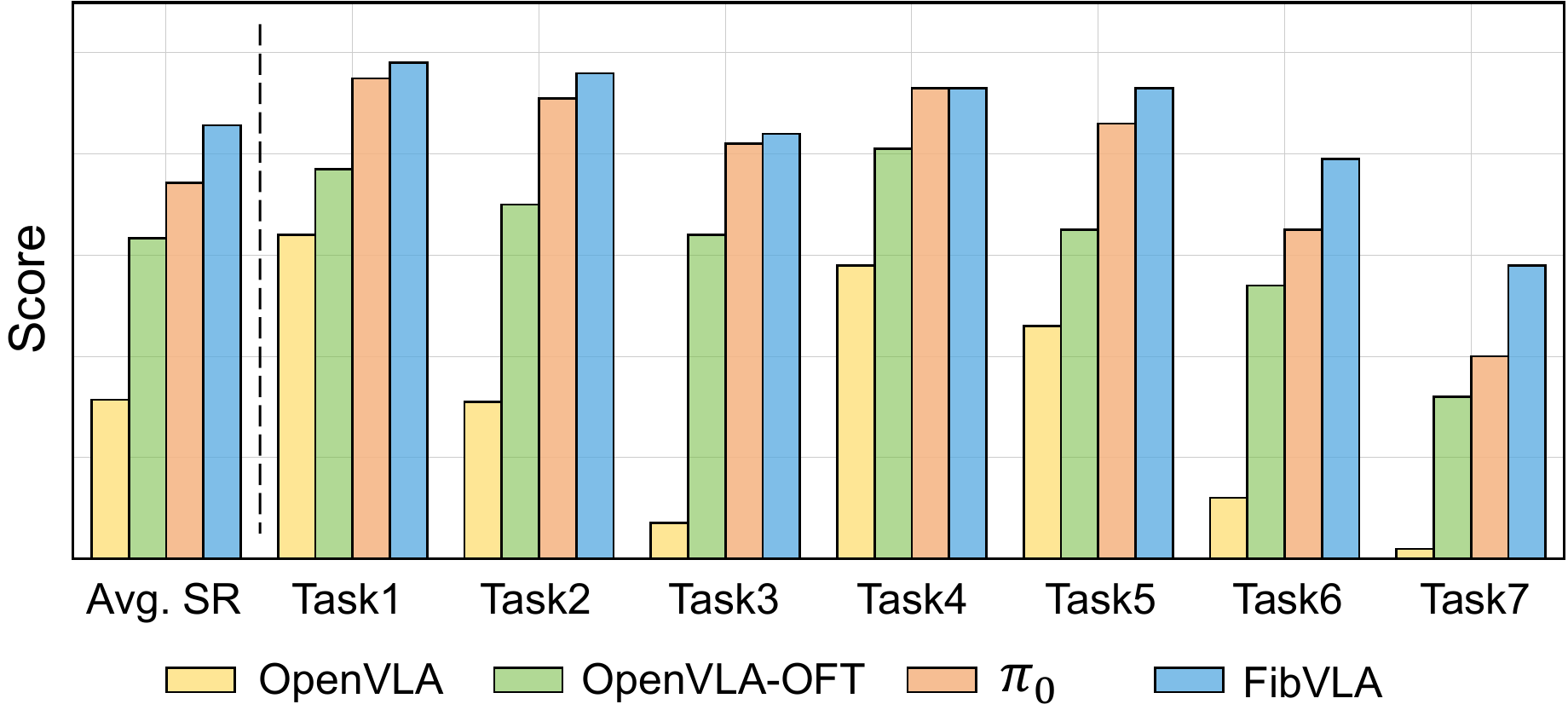} % 设置图片宽度
    \caption{Real-world performance comparison across individual tasks. Detailed subtask names are provided in Appendix~\ref{sec:subtasks}.}
    \label{fig:real-world}
    \vspace{-0.1cm}
\end{figure}

We compare $\mathtt{FibVLA}$ against mainstream baselines, including OpenVLA, OpenVLA-OFT, and $\pi_0$. As shown in Figure~\ref{fig:real-world}, 
$\mathtt{FibVLA}$ demonstrates superior performance across all tasks, achieving an average score of 85.7, which significantly outperforms the second-best performing model $\pi_0$ (by a margin of +11.4 points). 
Specifically, $\mathtt{FibVLA}$ not only exhibits exceptional stability and fluency with near-perfect scores on the first five basic tasks, but also establishes a substantial lead on the more challenging long-horizon tasks (Task 6 and Task 7). While other baseline models suffer significant performance degradation due to cumulative errors, $\mathtt{FibVLA}$ maintains robust performance by leveraging effective memory of historical states. 
These results validate that the temporal consistency provided by our module is not merely a theoretical advantage in simulation, serving as a critical factor in overcoming perceptual noise and dynamic uncertainties in the physical world, thereby enabling robust real-world deployment of complex long-horizon tasks. {Further details regarding the physical experimental setup and specific evaluation cases are described in Appendix~\ref{appendix: experiment}.}

\subsection{Ablation Study}
To validate the effectiveness of $\mathtt{FibVLA}$'s core components, we conduct two variants: 1) \textbf{w/o Sampling}, which removes the progressive frequency sampling strategy from $\mathtt{FibVLA}$. 2) \textbf{w/o CTE}, which removes the Channel-wise Temporal Encoding module. Ablation experiments have conducted on the LIBERO-Long benchmark and the real-world dataset. The experimental results are shown in Table~\ref{tab:module-ablation}.
Specifically, $\mathtt{FibVLA}$ significantly outperforms all variants. The w/o Sampling variant suffers the most severe performance degradation, with the success rate on LIBERO-Long dropping to 88.4\% and the real-world score decreasing to 78.3\%. 
This demonstrates that an efficient historical information compression mechanism is crucial for capturing long-horizon dependencies. 
Furthermore, w/o CTE also causes a noticeable performance loss (a decline of approximately 4\% in success rate), validating the key role in temporal feature extraction and fusion.
\input{table/Ablation}

\subsection{Efficiency Studiy}
To evaluate the inference efficiency of different sampling strategies, we compare the inference latency of our Fibonacci recurrent inference strategy against other common mathematical sampling methods, including normal Logarithmic sampling (\textbf{w/ Logarithmic}) and long-short term hybrid sampling (\textbf{w/ Long–Short}).  We also included recent state-of-the-art temporal VLA methods, TraceVLA and HiF-VLA, both of which employ traditional continuous time-window sampling mechanisms. 
For fair comparison, all methods have configured with a uniform historical time window size of 10 frames. 
% As shown in Table~\ref{tab:inference_time}, the results demonstrate the superior computational efficiency of Fibonacci sampling, which requires only 177 ms per inference, which corresponds to a {9.69\% and 27.16\%} reduction in inference time compared to TraceVLA and HiF-VLA, respectively. 
As shown in Table~\ref{tab:inference_time}, the results on the LIBERO-Long task suite demonstrate the superior computational efficiency of Fibonacci sampling. While maintaining high-level performance across these long-horizon tasks, our method requires only 177 ms per inference, achieving a reduction in latency of 9.69\% and 27.16\% compared to TraceVLA and HiF-VLA, respectively.
$\mathtt{FibVLA}$ also outperforms other sampling variants, indicating that the Fibonacci recurrent inference maximizes historical information coverage with minimal temporal overhead while reducing computational latency, thereby offering significant advantages for real-time robotic deployment.

\input{table/Inference}

%% file: table/LIBERO.tex
\begin{table}[t]

\centering

\renewcommand{\arraystretch}{1.1}

\setlength{\tabcolsep}{3pt}

\caption{Comparison of success rates on the LIBERO. The \textbf{best} results are highlighted in bold, and the \underline{second-best} results are underlined. This convention is applied to all subsequent tables.}

\begin{tabular}{lccccc}

\toprule

\multicolumn{1}{l}{\textbf{Method}} &

\makecell{\textbf{Avg. SR}} &

\textbf{Spatial} &

\textbf{Object} &

\textbf{Goal} &

\textbf{Long} \\

\midrule

% 数据已重新排列：将原本最后一列的平均分移到第二列

Octo     & 75.1 & 78.9 & 85.7 & 84.6  & 51.1  \\
$\pi_0$       & \underline{94.2} & 96.8 & \textbf{98.8} & \underline{95.8} & 85.2  \\
OpenVLA       & 76.5 & 84.7  & 88.4  & 79.2  & 53.7 \\
CogACT   & 93.5 & \underline{97.2} & \underline{98.0} & 90.2 & \underline{88.8}  \\
TraceVLA      & 74.8 & 84.6 & 85.2 & 75.1 &  54.1 \\
SpatialVLA    & 78.1 & 88.2 & 89.9 & 78.6 & 55.5 \\
4D-VLA        & 88.6 & 88.9 & 95.2 & 90.9 & 79.1 \\
\rowcolor{gray!15}
$\mathtt{FibVLA}$ & \textbf{96.8} & \textbf{97.8} & \underline{98.0} & \textbf{96.4} & \textbf{95.2}  \\
\bottomrule
\end{tabular}

\label{tab:libero}
\vspace{-0.5cm}
\end{table}

%% file: table/Simpler-fractal.tex
\begin{table*}[t]
\centering
\renewcommand{\arraystretch}{1.1}
\setlength{\tabcolsep}{4pt}
\caption{Comparison of success rates on the SimplerEnv-Fractal, reporting success rates across Visual Matching (VM) and Visual Aggregation (VA) settings. Detailed subtask names for this and subsequent evaluations are provided in Appendix~\ref{sec:subtasks}.}
\label{tab:fractal} % <--- 建议移到这里，紧跟 caption

\begin{tabular}{l|ccccc|ccccc|c} 
\toprule

\multirow{2}{*}{\textbf{Method}} & 
\multicolumn{5}{c|}{\textbf{Visual Matching (VM)}} & 
\multicolumn{5}{c|}{\textbf{Visual Aggregation (VA)}} & 
\multirow{2}{*}{\textbf{Overall}} \\ 

 & \textbf{Avg. SR} & \textbf{Task1} & \textbf{Task2} & \textbf{Task3} & \textbf{Task4} 
 & \textbf{Avg. SR} & \textbf{Task1} & \textbf{Task2} & \textbf{Task3} & \textbf{Task4} & \\ 
\midrule
RT-1-X     & 42.4 & 56.7 & 31.7 & 59.7 & 21.3 & 30.2 & 49.0 & 32.3 & 29.4 & 10.1 & 36.3 \\
RT-2-X     & 46.3 & 78.7 & 77.9 & 25.0 & 3.7  & 54.4 & \underline{82.3} & 79.2 & 35.5 & 20.6 & 50.4 \\
OpenVLA    & 34.3 & 18.0 & 56.3 & 63.0 & 0.0  & 39.3 & 60.8 & 67.7 & 28.8 & 0.0  & 36.8 \\
Octo  & 11.0 & 17.0 & 4.2  & 22.7 & 0.0  & 1.2  & 0.6  & 3.1  & 1.1  & 0.0  & 6.1 \\
$\pi_0$    & 69.1 & 88.0 & 80.3 & 56.0 & \textbf{52.2} & --   & --   & --   & --   & --   & --   \\
CogACT     & \underline{74.8} & \textbf{91.3} & 85.0 & \underline{71.8} & \underline{50.9} & \underline{61.3} & \textbf{89.6} & \underline{80.8} & 28.3 &\underline{46.6} & \underline{68.1} \\
TraceVLA   & 45.8 & 45.0 & 63.8 & 63.1 & 11.1 & 49.8 & 64.3 & 60.6 & \textbf{61.6} & 12.5 & 47.8 \\
SpatialVLA & 56.0 & 79.3 & \textbf{90.0} & 54.6 & 0.0  & 51.8 & 78.7 & \textbf{83.0} & 39.2 & 6.3  & 53.9 \\
\rowcolor{gray!15}
$\mathtt{FibVLA}$ & \textbf{78.6} & \underline{89.0} & \underline{86.7} & \textbf{86.5} & \textbf{52.2} & \textbf{65.5} & 82.1 & 77.9 & \underline{43.6} & \textbf{58.3} & \textbf{72.1} \\
\bottomrule
\end{tabular}%
\vspace{-0.2cm}
\end{table*}

%% file: table/Simpler-bridge.tex
\begin{table}[t]
\centering
\renewcommand{\arraystretch}{1.1}
\setlength{\tabcolsep}{2pt}
\caption{Comparison of success rates on the SimplerEnv-Bridge.}
\begin{tabular}{lccccc}
\toprule
% 表头：Method | Avg | Task 1 | Task 2 | Task 3 | Task 4
\multicolumn{1}{l}{\textbf{Method}} &
\makecell{\textbf{Avg. SR}} &
\textbf{Task 1} &
\textbf{Task 2} &
\textbf{Task 3} &
\textbf{Task 4} \\
\midrule
% 数据已重新排列：将原本最后一列的平均分移到第二列

RT-1-X        & 1.1  & 0.0  & 4.2  & 0.0  & 0.0   \\
OpenVLA       & 4.2  & 4.2  & 0.0  & 0.0  & 12.5  \\
Octo     & 17.5 & 15.8 & 12.5 & 0.0  & 41.7  \\
$\pi_0$       & \underline{55.7} & 63.3 & \underline{58.8} & 21.3 & 79.2  \\
CogACT   & 51.3 & \underline{71.7} & 50.8 & 15.0 & 67.5  \\
TraceVLA      & 27.7 & 12.5 & 16.6 & 16.6 & 65.0  \\
SpatialVLA    & 42.7 & 16.7 & 25.0 & \underline{29.2} & \textbf{100} \\
\rowcolor{gray!15}
$\mathtt{FibVLA}$ & \textbf{67.3} & \textbf{75.0} & \textbf{65.4} & \textbf{35.2} & \underline{93.7}  \\
\bottomrule
\end{tabular}
\vspace{-0.5cm}
\label{tab:bridge}
\end{table}

%% file: table/Mikasa.tex
\begin{table}[t]
\centering
\renewcommand{\arraystretch}{1.1}
\setlength{\tabcolsep}{3pt}
\caption{Comparison of success rates on the MIKASA-Robo.}
% 调整为 lccccc (共6列)
\begin{tabular}{lccccc}
\toprule
% 表头删除了 Task3
\textbf{Method} & \textbf{Avg. SR} & \textbf{Task1} & \textbf{Task2} & \textbf{Task3} & \textbf{Task4} \\
\midrule
% 数据删除了原 Task3 的那一列
$\pi_0$      & \underline{33.0} & 33.0 & \textbf{42.0} & \underline{31.0} & \underline{26.0} \\
SpatialVLA   & 22.0 & 23.0 & 27.0 & 18.0 & 20.0 \\
OpenVLA-OFT  & 26.5 & \underline{48.0} & 14.0 & 27.0 & 17.0 \\
\rowcolor{gray!15}
$\mathtt{FibVLA}$ & \textbf{46.5} & \textbf{78.0} & \underline{37.0} & \textbf{36.0} & \textbf{35.0} \\
\bottomrule
\end{tabular}
\vspace{-0.4cm}
\label{tab:mikasa}
\end{table}

%% file: table/Ablation.tex
\begin{table}[t]
\centering
\renewcommand{\arraystretch}{1.1} % 稍微增加行高，防止文字挤在一起
\setlength{\tabcolsep}{6pt}      % 增加列间距，看起来更宽敞

\caption{Ablation study of $\mathtt{FibVLA}$ components on LIBERO-Long and Real-world dataset.}
\label{tab:module-ablation}

\begin{tabular}{lcc}
\toprule
% 表头部分
% \makecell[l] 表示左对齐，默认垂直居中
% \makecell[c] 表示居中对齐，\\ 用于换行
\makecell[l]{\textbf{Variant}} & 
\makecell{\textbf{LIBERO-Long} \\ \textbf{(Avg. SR)}} & 
\makecell{\textbf{Real-world} \\ \textbf{(Avg. Score)}} \\
\midrule
% 第一行：FibVLA (Ours)
\rowcolor{gray!15}
$\mathtt{FibVLA}$    & \textbf{95.2} & \textbf{85.7} \\
\hspace{2mm} --(w/o Sampling) & 88.4 & 78.3 \\
\hspace{2mm} --(w/o CTE)      & 91.2 & 80.0 \\

\bottomrule
\end{tabular}
\vspace{-0.5cm}
\end{table}

%% file: table/Inference.tex
\begin{table}[t]
\centering
% 保持行高设置
\renewcommand{\arraystretch}{1.1} 
% [列间距调整] 因为只有3列，建议设大一点（如15pt-20pt），让表格看起来更舒展
\setlength{\tabcolsep}{3pt}      
\caption{Comparison of inference time across different sampling strategies, including both competitive baselines and $\mathtt{FibVLA}$ variants utilizing alternative sampling curves.}
% lcc 表示：第1列左对齐，后2列居中
\begin{tabular}{lccc}
\toprule
\textbf{Sampling Method} & \makecell{\textbf{Inference Time} \\ \textbf{(ms)}} & \makecell{\textbf{Avg. SR} \\ \textbf{(LIBERO-Long)}} \\
\midrule
  \rowcolor{gray!15} $\mathtt{FibVLA}$  & \textbf{177} & \textbf{95.2}\\
  \hspace{2mm} --(w/ Logarithmic) &  201 &  94.6\\
  \hspace{2mm} --(w/ Long–Short)  &  235 & \underline{95.5}\\
  TraceVLA &  \underline{196} & 54.1\\
  HiF-VLA &   243 & 94.4\\
\bottomrule
\end{tabular}
\vspace{-0.5cm}
\label{tab:inference_time}
\end{table}

%% file: table/Benchmark.tex
\begin{itemize}[leftmargin=*]
    \label{sec:appendix_benchmarks}
    \item \textbf{LIBERO~\cite{liu2023libero}:} A lifelong robotic learning benchmark designed to decouple knowledge transfer. Here, LIBERO-Spatial, Object, and Goal (10 tasks each) investigate the transfer of spatial relationships, object types, and motion behaviors, respectively, while LIBERO-Long comprises 10 challenging long-horizon tasks for evaluating downstream performance in complex, multi-step scenarios.
    
    \item \textbf{MIKASA-Robo~\cite{cherepanov2025mikasa}:} A comprehensive benchmark comprising 32 memory-intensive manipulation tasks across 12 categories, designed to evaluate four core memory capabilities: Object Memory, Spatial Memory, Sequential Memory, and Memory Capacity. By targeting these distinct cognitive aspects, this benchmark provides a rigorous environment for assessing a model’s generalization in handling complex geometries and contact-rich interactions.
    
    \item \textbf{SimplerEnv~\cite{li2025simpler}:} An open-source simulation suite for evaluating real-to-sim transfer, encompassing robotic setups from Google Robot (RT-series) and BridgeData V2. It offers a standardized Gym interface for seamless task interaction and provides integrated inference pipelines for evaluating generalist policies like RT-1 and Octo.
    \begin{itemize}
        \item \textbf{Bridge V2~\cite{walke2023bridgedata}:} A large-scale real-world dataset comprising over 60k trajectories across diverse environments using the WidowX 250 manipulator. As a key component of the Open X-Embodiment cross-embodiment dataset, Bridge V2 serves as a primary training source for verifying a model's effectiveness in open-vocabulary instruction following, with performance further validated in the aligned SimplerEnv.
    
        \item \textbf{Fractal~\cite{brohan2023rt}:} A large-scale real-world robotics dataset released alongside the RT-1 model, consisting of approximately 130k episodes and over 700 tasks collected by a fleet of 13 robots over 17 months. Characterized by its high diversity in objects, backgrounds, and manipulation skills, Fractal serves as a foundational data source for pre-training generalist VLA models and is a core component of the Open X-Embodiment dataset.
     \end{itemize}    

    \item \textbf{Real-world:} {The real-world benchmarking is conducted on a curated custom dataset comprising over 600k frames across 15 tasks, encompassing both single- and dual-arm manipulations with varying temporal horizons. To prioritize experimental reproducibility, a standardized environment is established featuring a consistent white background and black tablecloth, with interaction objects limited to easily accessible props such as various cubes and common household items. A subset of 7 representative tasks was selected from this suite for comprehensive performance evaluation.}

\end{itemize}

%% file: table/Baseline.tex
\begin{itemize}[leftmargin=*]
\label{sec:appendix_baselines}
    \item \textbf{RT-1-X~\cite{brohan2023rt} \& RT-2-X~\cite{zitkovich2023rt}:} \textbf{RT-1-X} is a transformer-based robot action model that uses EfficientNet as a vision encoder to output discretized action tokens. \textbf{RT-2-X} is a representative VLA model that transforms a Vision-Language Model (VLM) into a closed-loop robotic policy through co-fine-tuning on large-scale internet data and robotic trajectories.

    \item \textbf{OpenVLA~\cite{kim2025openvla} \& OpenVLA-OFT~\cite{kim2025fine}:} {\textbf{OpenVLA} is a Llama 2-based model using Prismatic encoders to generate actions via discrete token quantization. Its successor, \textbf{OpenVLA-OFT}, optimizes this framework by adopting a continuous action head with $L_1$ regression and action chunking. This transition from autoregressive tokenization to direct regression enables higher control frequencies and improved inference efficiency while maintaining the original model's strong generalization.}
    
    \item \textbf{Octo~\cite{ghosh2024octo}:} A transformer-based generalist robot policy pre-trained on the Open X-Embodiment dataset. Unlike the RT series, Octo employs a diffusion head to output multi-modal action distributions rather than discrete tokens.

    \item \textbf{$\pi_0$~\cite{black2024pi_0}:} A foundation model introduced by Physical Intelligence. Similar to $\mathtt{FibVLA}$, it leverages flow matching techniques to model continuous action distributions. As a powerful generalist policy, it represents the frontier of generative action modeling.
    
    \item \textbf{CogACT~\cite{li2024cogact}:} A VLA model designed to distinguish ``cognition'' from ``action.'' By decoupling Chain-of-Thought (CoT) reasoning from specific action execution, it enhances task planning and execution capabilities for complex long-horizon tasks.
    
    \item \textbf{TraceVLA~\cite{zhengtracevla}:} This method explicitly generates ``visual action traces'' via visual prompts to guide the VLA model in understanding the end-effector's motion path, thereby improving manipulation precision.
    
    \item \textbf{SpatialVLA~\cite{qu2025spatialvla}:} Focuses on addressing the lack of spatial awareness in VLA models. It injects geometric knowledge into the VLM backbone by co-training action prediction with 3D spatial reasoning tasks, such as depth prediction or 3D bounding box regression.
    
    \item \textbf{4D-VLA~\cite{zhang20254d}:} Further introduces the temporal dimension on top of 3D perception. By processing dynamic 3D point cloud video streams, it enables the model to understand dynamic changes of objects in space-time, generating spatiotemporally consistent action trajectories.

    \item \textbf{HiF-VLA~\cite{lin2025hif}:} A unified framework that leverages motion representations for bidirectional temporal reasoning, encompassing hindsight priors and foresight anticipation. It addresses temporal myopia in VLA models by capturing inter-state dynamics to filter static background noise effectively.
\end{itemize}

%% file: table/Instruction.tex
\begin{table}[ht]
\centering
\renewcommand{\arraystretch}{1.1} % [行间距] 1.0是默认，1.1稍微宽松一点
\setlength{\tabcolsep}{3pt}       % [列间距] 默认是6pt。因为你有12列，建议设为 3pt
\caption{Mapping between Task IDs and language instructions across evaluated benchmarks.}
\label{tab:benchmark_results}
% 调整列宽以适应页面，可根据需要修改 'c' 的个数或使用 X 列
\begin{tabular}{lccccc}
\toprule
\textbf{Task ID} & \textbf{LIBERO-Long} & \textbf{\makecell{SimplerEnv\\-Bridge}} & \textbf{\makecell{SimplerEnv\\-Fractal}} & \textbf{MIKASA-Robo} & \textbf{\makecell{Real-world}} \\
\midrule
Task 1  & \makecell{Put soup and \\ box in basket} & \makecell{PutSpoonOn\\TableCloth} & \makecell{GraspSingle\\OpenedCokeCan} & ShellGameTouch & Place Bowl (Test) \\
Task 2  & \makecell{Put box and \\ butter in basket} & \makecell{PutCarrotOnPlate} & \makecell{MoveNearGoogle\\BakedTex} & InterceptMedium & Place Bowl (Color) \\
Task 3  & \makecell{Turn on stove \\ and put pot} & \makecell{StackGreenCubeOn\\YellowCubeBakedTex} & \makecell{Open/CloseDrawer} & TakeItBack & Push Cube (Shape) \\
Task 4  & \makecell{Put bowl in \\ drawer and close} & \makecell{PutEggplant\\InBasket} & \makecell{OpenTopDrawer\\andPlaceApple} & RememberShape5&  Pick Cube\\
Task 5  & \makecell{Put mugs on left \\ and right plates} & - & - & - & Dual-Arm Handover \\
Task 6  & \makecell{Pick book and \\ place it in back} & - & - & - & \makecell{Stack Bowls \\(Instruction)} \\
Task 7  & \makecell{Put mug on plate, \\ pudding right} & - & - & - & Stack Bowls (Visual) \\
Task 8  & \makecell{Put soup and \\ sauce in basket} & - & - & - & - \\
Task 9  & \makecell{Put both pots \\ on stove} & - & - & - & - \\
Task 10 & \makecell{Put mug in micro-\\wave and close} & - & - & - & - \\
\bottomrule
\end{tabular}
\end{table}

%% file: table/LIBERO-10.tex
\begin{table*}[htbp]
\centering
% --- 调整参数区域 ---
\renewcommand{\arraystretch}{1.1} % [行间距] 1.0是默认，1.1稍微宽松一点
\setlength{\tabcolsep}{4pt}       % [列间距] 默认是6pt。因为你有12列，建议设为 3pt 到 5pt 之间，防止超宽
% ------------------
\caption{Comparison of success rates on LIBERO-Long subtasks.}
\begin{tabular}{lccccccccccc}
\toprule
\textbf{Method} &
  \textbf{\makecell{Avg. SR}} & 
  \textbf{Task1} &
  \textbf{Task2} &
  \textbf{Task3} &
  \textbf{Task4} &
  \textbf{Task5} &
  \textbf{Task6} &
  \textbf{Task7} &
  \textbf{Task8} &
  \textbf{Task9} &
  \textbf{Task10} \\ \midrule
OpenVLA        & 54.0 & 35.0 & 95.0 & 65.0 & 45.0 & 40.0 & 80.0 & 60.0 & 45.0 & 20.0 & 55.0 \\
OpenVLA-OFT    & 91.0 & 82.0 & \underline{96.0} & \underline{96.0} & 94.0 & 90.0 & 96.0 & \underline{92.0} & \textbf{100}  & 70.0 & \textbf{94.0} \\
$\pi_0$        & 72.0 & 86.0 & 86.0 & 60.0 & 86.0 & 82.0 & 52.0 & 84.0 & 82.0 & 26.0 & 76.0 \\
UniVLA     & 63.0 & 64.0 & 82.0 & 76.0 & 96.0 & 58.0 & \underline{98.0} & 24.0 & 74.0 & 32.0 & 26.0 \\
HiF-VLA        & \underline{94.4} & \underline{94.0} & \textbf{98.0} & \textbf{100}  & \textbf{100}  & \underline{94.0} & \textbf{100}  & 90.0 & \underline{98.0} & \underline{76.0} & \textbf{94.0} \\ 
\rowcolor{gray!15} 
$\mathtt{FibVLA}$   & \textbf{95.2} & \textbf{96.0} & \textbf{98.0} & \underline{96.0} & \underline{98.0} & \textbf{96.0} & 96.0 & \textbf{98.0} & \textbf{100}  & \textbf{88.0} & \underline{86.0} \\ \bottomrule
\end{tabular}%

\label{tab:main_results}
\end{table*}

%% file: table/Rubric.tex
% === 样式设置 ===
\small
\renewcommand{\arraystretch}{1.3} 
\renewcommand{\tabularxcolumn}[1]{m{#1}} % 保持内容垂直居中
% === 开始跨页表格 ===
% 注意：这里不再使用 \begin{table*}，直接用 xltabular
\begin{xltabular}{\textwidth}{@{} >{\raggedright\arraybackslash}m{4.5cm} >{\raggedright\arraybackslash}X >{\centering\arraybackslash}m{1.6cm} >{\centering\arraybackslash}m{1.6cm} @{}}

% === 1. 标题和标签 (放在最前面) ===
\caption{List of Self-collected Real-world Evaluation Tasks with Decomposed Action Steps} \label{tab:rubric} \\

% === 2. 第一页的表头 ===
\toprule 
\textbf{Task \& Instruction} & 
\textbf{Evaluation Steps (5 Sub-steps)} & 
\textbf{Config} & 
\textbf{Horizon} \\
\midrule 
\endfirsthead

% === 3. 续页的表头 (第二页及以后显示的表头) ===
\multicolumn{4}{c}{\small \textit{... Continued from previous page}} \\
\toprule
\textbf{Task \& Instruction} & 
\textbf{Evaluation Steps (5 Sub-steps)} & 
\textbf{Config} & 
\textbf{Horizon} \\
\midrule
\endhead

% === 4. 表格底部的结束线 ===
\bottomrule
\endlastfoot

% === 5. 表格内容 ===

\textbf{Place Bowl (Test):} Put the blue plate on desk, pick up the white bowl, put the bowl on the plate. & 
1. Move to hover directly over the white bowl. \newline
2. Descend vertically and close the gripper to clamp the bowl. \newline
3. Lift the bowl and move horizontally to the plate. \newline
4. Descend until the bowl bottom touches the plate. \newline
5. Open the gripper and retract vertically. & 
Single & Short \\
\midrule

\textbf{Place Bowl (Spatial):} Put the blue plate on desk, pick up the white bowl, put the bowl on the plate. & 
1. Move to the bowl's random position and align the wrist. \newline
2. Descend and close the gripper upon contact. \newline
3. Lift and reorient the wrist in mid-air to level the bowl. \newline
4. Move carrying the bowl to the coordinates of the plate. \newline
5. Lower the bowl onto the plate and open the gripper. & 
Single & Short \\
\midrule

\textbf{Place Bowl (Color):} Put the bowl on the color [X] plate. & 
1. Move to hover over the white bowl. \newline
2. Descend, close the gripper, and lift the bowl. \newline
3. Move the arm to the position of the [Color] plate. \newline
4. Descend vertically to place the bowl into the plate. \newline
5. Open the gripper and retract the arm. & 
Single & Short \\
\midrule

\textbf{Push Cube (Shape):} Push the shape [X] block into the red region. & 
1. Move directly over the specific [Shape] block. \newline
2. Descend and align with the contact surface behind the object. \newline
3. Rotate the gripper to the pushing angle. \newline
4. Push the block horizontally into the red region. \newline
5. Lift the arm vertically to finish the task. & 
Single & Short \\
\midrule

\textbf{Pick Cube:} Pick up the left/middle/right one among the three cubes and put it into the bowl. & 
1. Move directly above the cube matching the [Position]. \newline
2. Descend and close the gripper to grab the cube. \newline
3. Lift the cube and move above the bowl. \newline
4. Open the gripper to let the cube fall into the bowl. \newline
5. Return the arm to the initial home position. & 
Single & Short \\
\midrule

\textbf{Pick Functional Object:} Put the object used for [X] on the plate. & 
1. Move to the center of mass of the target functional object. \newline
2. Rotate the wrist to a suitable grasping angle and close gripper. \newline
3. Lift the object vertically off the table. \newline
4. Move horizontally to the plate's position. \newline
5. Lower until contact is made and open the gripper. & 
Single & Short \\
\midrule

\textbf{Wipe Whiteboard:} Wipe the whiteboard clean. & 
1. Move to the eraser, descend, and close the gripper tightly. \newline
2. Move the eraser to the starting corner of the writing. \newline
3. Press down to ensure contact between eraser and board. \newline
4. Execute a wiping motion (zigzag or linear) across the marks. \newline
5. Lift the eraser vertically off the board. & 
Single & Short \\
\midrule

\textbf{Stack Bowls (Instruction):} Stack bowls in the order of color [X], [Y], [Z] from top to bottom. & 
1. Grasp bowl [Z], place it in the workspace center, release. \newline
2. Grasp bowl [Y], move it above [Z], align, and release. \newline
3. Grasp bowl [X], move it above [Y]. \newline
4. Align carefully and lower [X] onto [Y]. \newline
5. Open gripper and retreat to complete the tower. & 
Single & Long \\
\midrule

\textbf{Stack Bowls (Visual):} Stack bowls in the same/reverse order as the stack on the left. & 
1. Grasp the bottom-matching bowl and place on the target spot. \newline
2. Grasp the middle-matching bowl from the supply. \newline
3. Stack it precisely onto the bottom bowl and release. \newline
4. Grasp the top-matching bowl from the supply. \newline
5. Stack it onto the middle bowl and release. & 
Single & Long \\
\midrule

\textbf{Dual-Arm Handover:} Pick up cylinder with left arm, pass to right arm. & 
1. Left arm moves to the cylinder and closes the gripper. \newline
2. Left arm lifts object to center; Right arm moves to meet it. \newline
3. Right arm closes gripper on the free end of the cylinder. \newline
4. Left arm opens gripper to release the object. \newline
5. Right arm moves away with the object; Left arm retracts. & 
Dual & Short \\
\midrule

\textbf{Classification:} Put all the fruits into the bowl. & 
1. Move to the first fruit, grasp it, and lift. \newline
2. Move to the bowl and open gripper to release. \newline
3. Move to the second fruit, grasp it, and lift. \newline
4. Move to the bowl and open gripper to release. \newline
5. Repeat until no fruits remain, then return to home. & 
Single & Long \\
\midrule

\textbf{Mid-air Stacking:} Place the bowl on the plate in mid-air. & 
1. Left arm grasps the plate; Right arm grasps the bowl. \newline
2. Arms lift and meet in the center; Left holds plate flat. \newline
3. Right arm positions the bowl directly above the plate. \newline
4. Right arm lowers gently until the bowl touches the plate. \newline
5. Right arm opens gripper and moves away. & 
Dual & Short \\
\midrule

\textbf{Uncap Marker:} Remove the cap from the marker. & 
1. Left arm moves to hold the marker body firmly on the table. \newline
2. Right arm moves to the marker cap. \newline
3. Right arm closes gripper to clamp the cap. \newline
4. Right arm pulls horizontally along the axis to detach the cap. \newline
5. Right arm places the cap on the table. & 
Dual & Short \\
\midrule

\textbf{Make Coffee:} Make a cup of instant coffee. & 
1. Grip the powder cup, pour into the empty cup, and return it. \newline
2. Move to the spoon and grip the handle. \newline
3. Insert the spoon vertically into the filled cup. \newline
4. Perform a circular stirring motion with the wrist. \newline
5. Lift the spoon out of the cup. & 
Dual & Long \\
\midrule

\textbf{Fold Clothes:} Fold the clothes on the table. & 
1. Move to the left sleeve, grip, fold inward, and release. \newline
2. Move to the right sleeve, grip, fold inward, and release. \newline
3. Move to the bottom hem of the shirt. \newline
4. Grip and lift the hem towards the collar. \newline
5. Release to complete the fold and retract arms. & 
Dual & Long \\

\end{xltabular}

%% file: table/experimental_para.tex
\begin{table}[htbp]
\centering
\caption{Hyperparameter settings and data statistics for different evaluation environments.}
\label{tab:hyperparameters}
\resizebox{\textwidth}{!}{% 自动缩放表格以适应文本宽度
\begin{tabular}{llccccc}
\toprule
\textbf{Category} & \textbf{Hyperparameter} & \textbf{LIBERO} & \textbf{MIKASA-Robo} & \textbf{SimplerEnv-Bridge} & \textbf{SimplerEnv-Fractal} & \textbf{Real-world} \\ 
\midrule
\multirow{9}{*}{\textbf{Optimization}} 
 & Optimizer & AdamW & AdamW & AdamW & AdamW & AdamW \\
 & Optimizer Betas & $(0.9, 0.95)$ & $(0.9, 0.95)$ & $(0.9, 0.95)$ & $(0.9, 0.95)$ & $(0.9, 0.95)$ \\
 & Peak LR & $2.5 \times 10^{-5}$ & $2.5 \times 10^{-5}$ & $2.5 \times 10^{-5}$ & $2.5 \times 10^{-5}$ & $2.5 \times 10^{-5}$ \\
 & Min LR & $2.5 \times 10^{-6}$ & $2.5 \times 10^{-6}$ & $2.5 \times 10^{-6}$ & $2.5 \times 10^{-6}$ & $2.5 \times 10^{-6}$ \\
 & Weight Decay & $1.0 \times 10^{-10}$ & $1.0 \times 10^{-10}$ & $1.0 \times 10^{-10}$ & $1.0 \times 10^{-10}$ & $1.0 \times 10^{-10}$ \\
 & Gradient Clipping & 1 & 1 & 1 & 1 & 1 \\
 & LR Scheduler & Cosine Decay & Cosine Decay & Cosine Decay & Cosine Decay & Cosine Decay \\
 & Warmup Steps & 1,000 & 1,000 & 1,000 & 1,000 & 1,000 \\
 & Training Steps & 30,000 & 70,000 & 100,000 & 70,000 & 30,000 \\ 
 & Batch Size & 16 & 16 & 16 & 32 & 16 \\ 
\midrule
\multirow{6}{*}{\textbf{Data}} 
 & Episodes & 1,693 & 6,000 & 25,460 & 26,152 & 700 \\
 & Frames & 273,465 & 585,000 & 864,292 & 1,067,618 & 288,594 \\
 & Action Dim & 7 & 8 & 7 & 7 & 7 / 14 \\
 & Cameras & $1\times$Wrist $+ 1\times$3rd & $1\times$Wrist $+ 1\times$3rd & $1\times$3rd & $1\times$3rd & $2\times$Wrist $+ 1\times$3rd \\
 & State Dim & 8 & 25 & -- & -- & 7 / 14 \\
 & Instruction & Natural & Natural & Natural & Templated & Natural \\ 
\bottomrule
\end{tabular}%
}
\end{table}

%% file: Proof.tex
\begin{definition}[Logarithmic Sampling Sequence]
Let $S = \{x_0, x_1, x_2, \dots\}$ be a strictly increasing integer sequence with anchor $x_0 := 0$.
For $i \ge 1$, the elements are generated by a base parameter $q_{\min} > 0$ and a growth rate $r > 1$:
\begin{equation}
    x_i = \lfloor q_{\min} \cdot r^i \rfloor .
\end{equation}
\end{definition}

\begin{definition}[Sparse Sampling Constraint]
The sequence $S$ is said to satisfy the \emph{sparse sampling constraint} if any three consecutive terms satisfy
the second-order recurrence inequality
\begin{equation}
    x_i \ge x_{i-1} + x_{i-2}, \quad \forall i \ge 2 .
\end{equation}
\end{definition}

\begin{definition}[Coincidence Count]
For a given step size $\Delta \in \mathbb{Z}^+$, define the coincidence count function
\begin{equation}
    H(\Delta)
    =
    \left|
    \left\{
    (j,i) \in \mathbb{Z}_{\ge 0}^2
    \;\middle|\;
    j>i,\;
    x_j - x_i = \Delta
    \right\}
    \right|.
\end{equation}
\end{definition}

\begin{proposition}[Bounded Coincidence Count]
Let $S$ be any sequence satisfying the sparse sampling constraint.
Then for all $\Delta > 0$,
\begin{equation}
    H(\Delta) \le 3 .
\end{equation}
Moreover, equality can only occur if the sequence attains the Fibonacci boundary
$x_i = x_{i-1} + x_{i-2}$ and $\Delta \in S$.
\end{proposition}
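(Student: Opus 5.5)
The plan is to reduce $H(\Delta)$ to a count of admissible \emph{larger} indices $j$, and then to pin those indices into a window of width three using the sparse sampling constraint.

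\textbf{Step 1: at most one $i$ per $j$.} Because $S$ is strictly increasing, for a fixed $j$ the map $i\mapsto x_j-x_i$ on $\{0,\dots,j-1\}$ is strictly decreasing. So each $j$ contributes at most one pair $(j,i)$ with $x_j-x_i=\Delta$. Hence $H(\Delta)$ equals the number of $j\ge 1$ that admit some $i<j$ with $x_j-x_i=\Delta$.

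\textbf{Step 2: sandwich $\Delta$ between terms of $S$.} Suppose $x_j-x_i=\Delta$ with $i\le j-1$. The upper bound is immediate, since $x_i\ge x_0=0$ gives $\Delta\le x_j$. For the lower bound, $\Delta\ge x_j-x_{j-1}$. When $j\ge 2$ the sparse sampling constraint gives $x_j-x_{j-1}\ge x_{j-2}$, so
\begin{equation*}
x_{j-2}\le \Delta\le x_j .
\end{equation*}
When $j=1$ only $\Delta\le x_1$ is available, and the lower bound holds vacuously. I expect this step to be the only place the hypothesis enters. The point needing care is the small-index boundary: $j=1$, and $j=2$ with $x_0=0$, where the constraint reads $x_2\ge x_1$. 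These cases should be checked by hand, but they are consistent with the inequality above.

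\textbf{Step 3: locate the admissible $j$.} Since $\Delta>0=x_0$ and $x_i\to\infty$ (strict increase of integers), there is a unique $m\ge 0$ with $x_m\le\Delta<x_{m+1}$.
\begin{itemize}
\item The condition $\Delta\le x_j$ forces $j\ge m$, and in fact $j\ge m+1$ unless $\Delta=x_m$.
\item The condition $x_{j-2}\le\Delta<x_{m+1}$ forces $j-2\le m$.
\end{itemize}
Thus every admissible $j$ lies in $\{m,m+1,m+2\}$, which already gives $H(\Delta)\le 3$.

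\textbf{Step 4: the equality case.} If $H(\Delta)=3$, all three indices are admissible.
\begin{itemize}
\item $j=m$ requires $\Delta\le x_m$, hence $\Delta=x_m\in S$; the witnessing pair is $(m,0)$.
\item $j=m+2$ requires $\Delta\ge x_{m+2}-x_{m+1}\ge x_m=\Delta$. Both inequalities must therefore be equalities. This forces $i=m+1$ and $x_{m+2}=x_{m+1}+x_m$, which is exactly the Fibonacci boundary at that index.
\end{itemize}
This yields the "moreover" clause. For completeness I would note that $j=m+1$ is then realized by $i=m-1$, using $x_{m+1}-x_{m-1}=x_m$, which holds under the same boundary recurrence. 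This shows the bound $3$ is attained, although that is not needed for the inequality itself.
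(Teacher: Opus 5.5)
Your proof is correct, and it takes a genuinely different route from the paper's.

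\textbf{The paper's route.} The paper first reduces to an extremal case. It asserts that any sequence with $x_i\ge x_{i-1}+x_{i-2}$ dominates the Fibonacci sequence term-wise, so differences $x_j-x_i$ are smallest, and hence $H(\Delta)$ is largest, under the exact recurrence. It then takes $\Delta=x_k\in S$, dismissing $\Delta\notin S$ by ``monotonicity''. It lists the solutions by index gap, namely $(k,0)$, $(k+2,k+1)$ and $(k+1,k-1)$. Finally, it rules out gaps $j-i\ge 3$ using $x_{k+2}-x_{k-1}=2x_k$ and $x_{k+1}-x_{k-2}=2x_{k-1}$.

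\textbf{Your route.} You never pass to the Fibonacci sequence. Each $j$ carries at most one $i$, and the sandwich $x_{j-2}\le\Delta\le x_j$ confines the admissible $j$ to $\{m,m+1,m+2\}$. This works for an arbitrary constrained sequence and an arbitrary $\Delta$.

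\textbf{What each buys.} Your approach gives rigor exactly where the paper is weakest.
\begin{itemize}
\item The paper's extremal reduction is asserted rather than proved: term-wise domination controls the size of differences, not how many of them hit a prescribed $\Delta$.
\item The case $\Delta\notin S$ needs no separate hand-wave.
\item The equality case comes out as a sharp local condition: $\Delta=x_m$ and $x_{m+2}=x_{m+1}+x_m$.
\end{itemize}
That local reading of ``Fibonacci boundary'' is the only tenable one, since the boundary cannot hold at $i=2$ (it would force $x_2=x_1$). In return, the paper's enumeration exhibits explicit witnesses. These are precisely the reused frames in the Fibonacci recurrent inference with chunk length $L=x_k$.

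\textbf{A correction to your closing remark.} This does not affect the inequality. Realizing $j=m+1$ by $i=m-1$ uses the recurrence at index $m+1$, not the one at $m+2$ that you derived. The witness can also be $i=m$ when $x_{m+1}=2x_m$. For $m=1$ the pair $(2,0)$ is never available. For instance, $q_{\min}=0.77$ and $r=\phi$ give $0,1,2,3,5,8,\dots$, which attains $H(1)=3$ through $(1,0)$, $(2,1)$, $(3,2)$ even though $x_2\ne x_1+x_0$.
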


\begin{proof}
We consider solutions to the equation $x_j - x_i = \Delta$ with $j>i$.

Any sequence satisfying $x_i \ge x_{i-1} + x_{i-2}$ dominates the Fibonacci sequence term-wise.
Consequently, for any fixed index pair $(j,i)$, the difference $x_j - x_i$ is minimized in the
tight boundary case where equality holds, i.e.,
\begin{equation}
    x_i = x_{i-1} + x_{i-2}.
\end{equation}
Hence, the maximal possible value of $H(\Delta)$ is achieved under the Fibonacci recurrence,
which corresponds to the minimal growth rate compatible with the sparse sampling constraint.
It therefore suffices to analyze this extremal case. Without loss of generality, consider $\Delta = x_k \in S$ for some $k \ge 1$. If $\Delta \notin S$, monotonicity immediately reduces the number of admissible solutions. We enumerate all possible solutions to $x_j - x_i = x_k$:
\begin{description}[font=\bfseries, style=nextline, labelindent=2em, leftmargin=4em]

\item[Case 1 ($i=0$):]
Since $x_0 = 0$, the trivial solution $(j,i) = (k,0)$ always exists.

\item[Case 2 ($j=i+1$):]
From the Fibonacci recurrence,
\[
x_{i+1} - x_i = x_{i-1}.
\]
Setting $x_{i-1} = x_k$ yields $i = k+1$, giving the solution $(j,i) = (k+2,k+1)$.

\item[Case 3 ($j=i+2$):]
We have
\[
x_{i+2} - x_i = (x_{i+1} + x_i) - x_i = x_{i+1}.
\]
Setting $x_{i+1} = x_k$ yields $i = k-1$, giving the solution $(j,i) = (k+1,k-1)$.

\item[Case 4 ($j-i \ge 3$):]
We show that no solutions exist in this regime.
\begin{itemize}[label=\textbullet]
    \item If $j \ge k+2$, then for all admissible $i$,
    \[
    x_j - x_i \ge x_{k+2} - x_{k-1} = 2x_k > x_k .
    \]
    \item If $j = k+1$ and $i \le k-2$,
    \[
    x_{k+1} - x_{k-2} = 2x_{k-1} > x_k,
    \]
    since $x_k = x_{k-1} + x_{k-2}$.
\end{itemize}
Thus, $x_j - x_i$ strictly exceeds $x_k$, and no solutions arise.

\end{description}

Collecting all cases, there are at most three admissible solutions.
For sequences with strict inequality $x_i > x_{i-1} + x_{i-2}$,
Cases 2 and 3 are no longer attainable, and $H(\Delta)$ is further reduced.
Therefore, for all sequences satisfying the sparse sampling constraint,
\[
H(\Delta) \le 3 .
\]
\end{proof}

\paragraph{Remark (Connection to Logarithmic Sampling)}
Although the Fibonacci sequence is defined by a linear recurrence,
its closed-form expression given by Binet's formula is
\begin{equation}
    F_i = \frac{\phi^i - \psi^i}{\sqrt{5}},
\end{equation}
where $\phi = \frac{1+\sqrt{5}}{2}$ and $\psi = \frac{1-\sqrt{5}}{2}$.
Since $|\psi|<1$, the term $\psi^i$ decays exponentially, yielding the asymptotic relation
\begin{equation}
    F_i \sim \frac{1}{\sqrt{5}} \phi^i .
\end{equation}
Thus, the Fibonacci sequence can be viewed as an asymptotic logarithmic sampling sequence
with growth rate $r=\phi$.
This observation justifies treating the Fibonacci recurrence as the extremal boundary case
when analyzing coincidence bounds for logarithmic hindsight sampling sequences.

%% file: assets/HMI.tex
\begin{figure*}[htbp]
    \centering

    % --- 第一行 (Row 1) ---
    \begin{subfigure}[b]{0.32\textwidth} % 宽度调整为 0.32
        \centering
        \includegraphics[width=\linewidth]{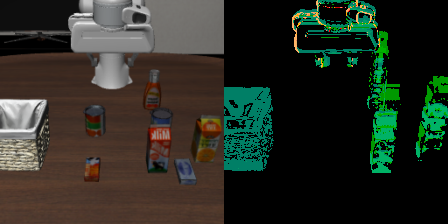}
        \caption{}
        \label{fig:sub1}
    \end{subfigure}
    \hfill % 弹性空格
    \begin{subfigure}[b]{0.32\textwidth}
        \centering
        \includegraphics[width=\linewidth]{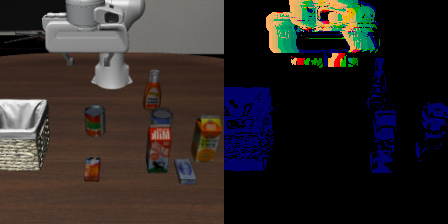}
        \caption{}
        \label{fig:sub2}
    \end{subfigure}
    \hfill
    \begin{subfigure}[b]{0.32\textwidth}
        \centering
        \includegraphics[width=\linewidth]{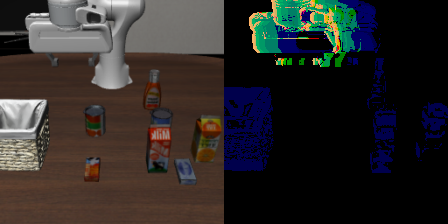}
        \caption{}
        \label{fig:sub3}
    \end{subfigure}

    \par\bigskip % 强制换行并增加一点垂直距离
    % \vspace{1em} % 或者用这个精确控制行间距

    % --- 第二行 (Row 2) ---
    \begin{subfigure}[b]{0.32\textwidth}
        \centering
        \includegraphics[width=\linewidth]{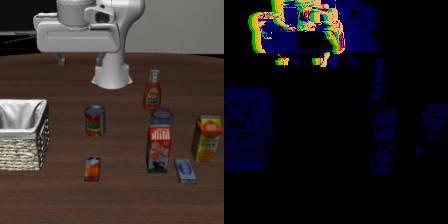}
        \caption{}
        \label{fig:sub4}
    \end{subfigure}
    \hfill
    \begin{subfigure}[b]{0.32\textwidth}
        \centering
        \includegraphics[width=\linewidth]{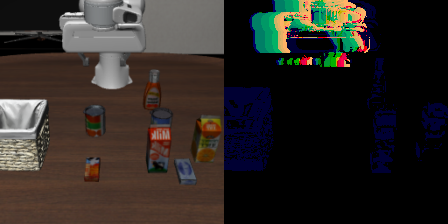}
        \caption{}
        \label{fig:sub5}
    \end{subfigure}
    \hfill
    \begin{subfigure}[b]{0.32\textwidth}
        \centering
        \includegraphics[width=\linewidth]{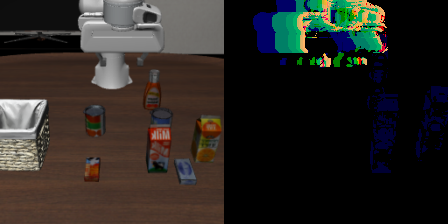}
        \caption{} % 修正了这里原本重复的 caption
        \label{fig:sub6}
    \end{subfigure}

    \caption{Visualization of visual inputs and corresponding image hindsight in the LIBERO simulation environment (10Hz). Each subfigure (a--f) displays the raw RGB camera observation on the left and the processed output from the channel-wise temporal encoding (CTE) module on the right. The Image Hindsight captures the temporal progression of the task within the latent representation.}
    \label{fig:six_images}
\end{figure*}

\begin{figure*}[!ht]
    \centering

    % --- 第一行 (Row 1) ---
    \begin{subfigure}[b]{0.32\textwidth} % 宽度调整为 0.32
        \centering
        \includegraphics[width=\linewidth]{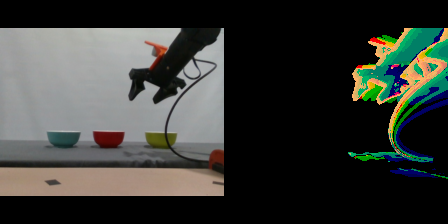}
        \caption{}
        \label{fig:sub1}
    \end{subfigure}
    \hfill % 弹性空格
    \begin{subfigure}[b]{0.32\textwidth}
        \centering
        \includegraphics[width=\linewidth]{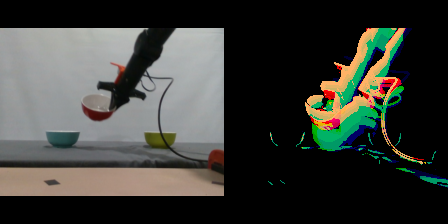}
        \caption{}
        \label{fig:sub2}
    \end{subfigure}
    \hfill
    \begin{subfigure}[b]{0.32\textwidth}
        \centering
        \includegraphics[width=\linewidth]{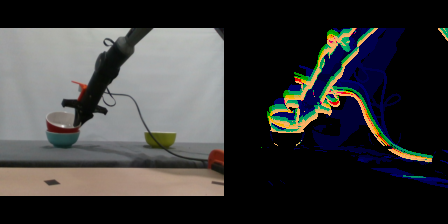}
        \caption{}
        \label{fig:sub3}
    \end{subfigure}

    \par\bigskip % 强制换行并增加一点垂直距离
    % \vspace{1em} % 或者用这个精确控制行间距

    % --- 第二行 (Row 2) ---
    \begin{subfigure}[b]{0.32\textwidth}
        \centering
        \includegraphics[width=\linewidth]{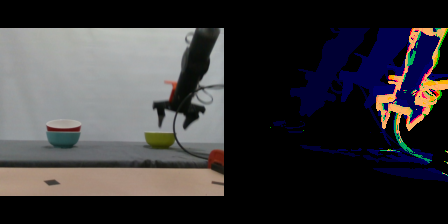}
        \caption{}
        \label{fig:sub4}
    \end{subfigure}
    \hfill
    \begin{subfigure}[b]{0.32\textwidth}
        \centering
        \includegraphics[width=\linewidth]{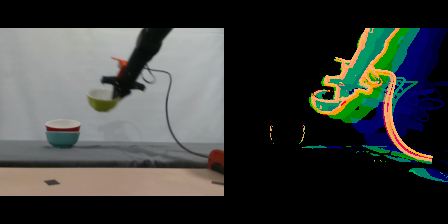}
        \caption{}
        \label{fig:sub5}
    \end{subfigure}
    \hfill
    \begin{subfigure}[b]{0.32\textwidth}
        \centering
        \includegraphics[width=\linewidth]{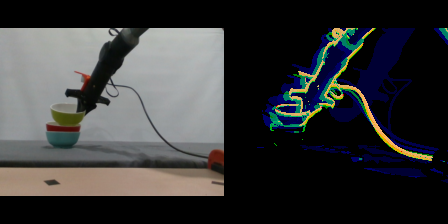}
        \caption{} % 修正了这里原本重复的 caption
        \label{fig:sub6}
    \end{subfigure}

    \caption{Visualization of visual inputs and image hindsight in real-world Piper robotic arm experiments (30Hz). Similar to the simulation, the left panels show raw physical observations, while the right panels display the corresponding processed image hindsight.}
    \label{fig:six_images}
\end{figure*}